\def\CTD{\mbox{CTD}}
\title{3D Shape Registration Using Spectral Graph Embedding and Probabilistic Matching\thanks{In \textit{Image Processing and Analysing With Graphs: Theory and Practice}, CRC Press, chapter 15, pp.441-474, 2012}}
\author{Avinash Sharma, Radu Horaud and Diana Mateus\\
Inria Grenoble Rh\^one-Alpes\\
655 avenue de l'Europe \\
38330 Montbonnot Saint-Martin, France\\
}
\date{}
\begin{document}
\maketitle


\section*{Abstract}
In this book chapter we address the problem of 3D shape registration and we propose a novel technique based on spectral graph theory and probabilistic matching. Recent advancement 
in shape acquisition technology has led to the capture of large amounts of 3D data. 
Existing real-time multi-camera 3D acquisition methods provide a frame-wise reliable visual-hull
or mesh representations for real 3D animation sequences 
The task of 3D shape analysis involves tracking, recognition, registration, etc. 
Analyzing 3D data in a single framework is still a challenging task considering
the large variability of the data gathered with different acquisition devices.
3D shape registration is one such challenging shape analysis task.
The main contribution of this chapter is to extend the spectral graph matching methods
to very large graphs by combining spectral graph matching with Laplacian embedding. 
Since the embedded representation of a graph is obtained by dimensionality reduction
we claim that the existing spectral-based methods are not easily applicable. 
We discuss solutions for the exact and
inexact graph isomorphism problems and recall the main spectral properties of the combinatorial graph Laplacian;
We provide a novel analysis of the commute-time embedding that allows us to interpret the latter in
terms of the PCA of a graph, and to select the appropriate dimension of the associated embedded 
metric space;
We derive a unit hyper-sphere normalization for the commute-time embedding that allows us to register
two shapes with different samplings;
We propose a novel method to find the eigenvalue-eigenvector ordering and the eigenvector sign using the
eigensignature (histogram) which is invariant to the isometric shape deformations and fits well in 
the spectral graph matching framework, and
we present a probabilistic shape matching formulation using an expectation maximization
point registration algorithm which alternates between aligning the eigenbases and finding a vertex-to-vertex assignment. 

\section{Introduction} 
\label{Sharma_sec:intro}  

In this chapter we discuss the problem of 3D shape matching\index{shape matching}. Recent advancement 
in shape acquisition technology has led to the capture of large amounts of 3D data. 
Existing real-time multi-camera 3D acquisition methods provide a frame-wise reliable visual-hull
or mesh representations for real 3D animation sequences 
~\cite{Franco-Boyer-PAMI-2008,Starck-Hilton-CGA-2007,Slabaugh-Malzbender-WVG-2001,Seitz-Brian-CVPR-2006, Vlasic-Baran-SIGGRAPH-2008,ZBH11}. The task of 3D shape analysis involves tracking, recognition, registration, etc. 
Analyzing 3D data in a single framework is still a challenging task considering
the large variability of the data gathered with different acquisition devices.
3D shape registration is one such challenging shape analysis task.
The major difficulties in shape registration arise due to: 1) variation in the shape acquisition 
techniques, 2) local deformations in non-rigid shapes, 3) large acquisition discrepancies 
(\textit{e.g.}, holes, topology change, surface acquisition noise), 4) local scale change. 

Most of the previous attempts of shape matching can be broadly categorized as \textit{extrinsic} 
or \textit{intrinsic} approaches depending on how they analyze the properties of the underlying manifold.
Extrinsic approaches mainly focus on finding a global or local rigid transformation between two 3D shapes. 

There is large set of approaches based on variations of the iterative closest point (ICP)
algorithm~\cite{ChenGerard92,BeslMcKay92,RusinkiewiczLevoy-3DIM-2001} that falls in the category of extrinsic approaches.
However, the majority of these approaches compute rigid transformations for shape registration and are 
not directly applicable to non-rigid shapes. 
Intrinsic approaches are a natural choice for finding dense correspondences between
articulated shapes, as they embed the shape in some canonical domain which preserves
some important properties of the manifold, \textit{e.g.}, geodesics and
angles. Intrinsic approaches are preferable over extrinsic as they provide a global representation which 
is invariant to non-rigid deformations that are common in the real-world 3D shapes. 

Interestingly, mesh representation also enables the adaptation of well established
graph matching algorithms that use eigenvalues and eigenvectors of graph matrices,
and are theoretically well investigated in the framework of \textit{spectral graph theory} (SGT)
\textit{e.g.}, \cite{Umeyama88,Wilkinson70}. 
Existing methods in SGT are mainly theoretical results applied to small graphs and under the premise  
that eigenvalues can be computed exactly.
However, spectral graph matching\index{spectral graph matching} does not easily generalize to very large graphs due to 
the following reasons: 1) eigenvalues are approximately computed using eigen-solvers, 2) eigenvalue multiplicity and 
hence ordering change are not well studied, 3) exact matching is intractable for very large graphs. 
It is important to note that these methods mainly focus on exact graph matching while majority 
of the real-world graph matching applications involve graphs with different cardinality and for which only a 
subgraph isomorphism can be sought. 

The main contribution of this work is to extend the spectral graph methods
to very large graphs by combining spectral graph matching with \textit{Laplacian embedding}. 
Since the embedded representation of a graph is obtained by dimensionality reduction
we claim that the existing SGT methods (\textit{e.g.}, \cite{Umeyama88}) are not easily applicable. 
The major contributions of this work are the following: 1) we discuss solutions for the exact and
inexact graph isomorphism problems and recall the main spectral properties of the combinatorial graph Laplacian,
2) we provide a novel analysis of the commute-time embedding that allows us to interpret the latter in
terms of the PCA of a graph, and to select the appropriate dimension of the associated embedded 
metric space, 
3) we derive a unit hyper-sphere normalization for the commute-time embedding that allows us to register
two shapes with different samplings, 
4) we propose a novel method to find the eigenvalue-eigenvector ordering and the eigenvector signs using the
eigensignatures (histograms) that are invariant to the isometric shape deformations and which fits well in 
the spectral graph matching framework,
5) we present a probabilistic shape matching\index{shape matching} formulation using an \textit{expectation maximization} (EM) framework for implementing a
point registration algorithm which alternates between aligning the eigenbases and finding a vertex-to-vertex assignment. 

The existing graph matching methods that use intrinsic representations are: 
~\cite{BronsteinBronstein2006,WangWang2007,JainZhang2007,ZengZeng2008,mateus:cvpr2008,
RuggeriPatane2009,Lipman2009,DubrovinaKimmel2010}. 
There is another class of methods that allows to combine intrinsic (geodesics) and extrinsic (appearance)
features and which were previously successfully applied for matching features in pairs of images
~\cite{ScottHiggins91,ShapiroBrady92,LuoHancock2001,WangHankcock2006PR,QiuHancock2007PR,LeordeanuHebert2005,
DuchenneBach2009,TorresaniKolmogorov2008,ZassShahua2008, MacielCosteira2003}.
Some recent approaches apply hierarchical matching to find dense correspondences
\cite{HuangAdams2008,ZengWang2010,SahilliogluYemez2010}.
However, many of these graph matching algorithms suffer from the problem of either computational 
intractability or a lack of proper metric as the 
Euclidean metric is not directly applicable while computing distances on non-rigid shapes.
A recent benchmarking of shape matching methods was performed in \cite{BronsteinBronstein2010a}. 
Recently, a few methods proposed a diffusion framework for the task of shape registration
\cite{Ovsjanikov-Merigot-SGP-2010,Sharma-Horaud-NORDIA-2010,Sharma-Horadu-CVPR-2011}. 

In this chapter we present an intrinsic approach for unsupervised 3D shape registration 
first proposed in~\cite{mateus:cvpr2008,KnossowSMH09}. In the first step, dimensionality reduction is performed 
using the graph Laplacian which allows us to embed a 3D shape in an isometric subspace 
invariant to non-rigid deformations. This leads to an embedded point cloud representation where
each vertex of the underlying graph is mapped to a point in a $K$-dimensional metric space.
Thus, the problem of non-rigid 3D shape registration is transformed into a $K$-dimensional
point registration task. However, before point registration, the two eigen spaces 
need to be correctly aligned. This alignment is critical for the spectral matching methods 
because the two eigen spaces are defined up to the signs and the ordering of the eigenvectors of 
their Laplacian matrices. This is achieved by a novel matching method that uses histograms of 
eigenvectors as eigensignatures.
In the final step, a point registration method based on a variant of 
the expectation-maximization (EM) algorithm \cite{HFYDZ11} is applied in order to register two sets of 
points associated with the Laplacian embeddings of the two shapes. The proposed algorithm
alternates between the estimation of an orthogonal transformation matrix associated with 
the alignment of the two eigen spaces and the computation of probabilistic vertex-to-vertex assignment.  
Figure~\ref{Sharma_fig:overview} presents the overview of the proposed method. 
According to the results summarized in \cite{BronsteinBronstein2010a}, this method 
is one among the best performing unsupervised shape matching algorithms.
%
\begin{figure}[t]
\begin{center}
\includegraphics[width=\linewidth]{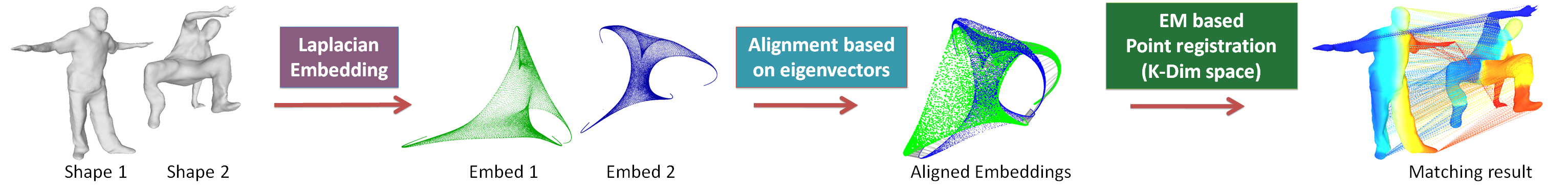} 
\end{center}
\caption{Overview of the proposed method. First, a Laplacian embedding is obtained for each shape. 
Next, these embeddings are aligned to handle the issue of sign flip and ordering change using the histogram 
matching. Finally, an Expectation-Maximization based point registration is performed to obtain 
dense probabilistic matching between two shapes.}
\label{Sharma_fig:overview}
\end{figure}

\paragraph{Chapter Overview:}
Graph matrices are introduced in section~\ref{Sharma_section:GraphMatrices}.
The problem of exact graph isomorphism and existing solutions are discussed in 
section~\ref{Sharma_section:SpectralGraphIsomorphism}. 
Section~\ref{Sharma_section:GraphMatchingwithDimReduction} deals with dimensionality 
reduction using the graph Laplacian in order to obtain embedded representations for 3D shapes. 
In the same section we discuss the PCA of graph embeddings and propose a unit hyper-sphere
normalization for these embeddings along with a method to choose the embedding dimension. 
Section~\ref{Sharma_section:AlignmentAndRegistration} introduces the formulation of 
maximum subgraph isomorphism before presenting a two-step method for 3D shape registration. 
In the first step Laplacian embeddings are aligned using histogram matching\index{histogram matching} while in the second 
step we briefly discuss an EM point registration method to obtain probabilistic shape registration.
Finally we present shape matching results in section~\ref{Sharma_section:Results} and 
conclude with a brief discussion in section~\ref{Sharma_section:discussion}.
\section{Graph Matrices}
\label{Sharma_section:GraphMatrices}
%
%

A shape can be treated as a connected \textit{undirected weighted graph}
$\G=\{\V,\E\}$ where $\V(\G)=\{v_1,\ldots,v_n\}$ is the vertex set,
$\E(\G)=\{e_{ij}\}$ is the edge set. 
Let $\adjmat$ be the weighted adjacency matrix\index{adjacency matrix}\indexsub{weighted}{adjacency matrix} of this graph. Each
$(i,j)^{\text{th}}$ entry of $\adjmat$ matrix stores weight $w_{ij}$ whenever there is an edge $e_{ij}\in \E(\G)$ between graph 
vertices $v_i$ and $v_j$ and $0$ otherwise with all the diagonal elements set to $0$ . 
We use the following notations:
The degree $d_i$ of a graph vertex $d_i=\sum_{i\sim j}
w_{ij}$ ($i\sim j$ denotes the set of vertices
$v_j$ which are adjacent to $v_i$),
the \textit{degree
  matrix} $\degmat=\diag[d_1 \hdots d_i \hdots d_n]$, the $n\times 1$
vector $\mathbbm{1}=(1\hdots 1)\tp$ (the constant vector), the $n\times 1$ \textit{degree
  vector} $\vec{d}=\degmat\mathbbm{1}$, and the \textit{graph volume}
$\vol(\G)=\sum_i d_i$.

In spectral graph theory, it is
common~\cite{belkin2003laplacian,Luxburg2007} to use the following expression for the edge weights:
\begin{equation}
w_{ij} = e^{-\frac{\text{dist}^2(v_i,v_j)}{\sigma^2}},
\end{equation}
where $\text{dist}(v_i,v_j)$ denotes any distance metric between two vertices and $\sigma$
is a free parameter.
In the case of a \textit{fully connected graph}, matrix $\adjmat$ is
  also referred to as the \textit{similarity matrix}.
The \textit{normalized weighted adjacency matrix} writes
$\tilde{\adjmat} =\degmat^{-1/2}  \adjmat  \degmat^{-1/2}$.
The  \textit{transition} matrix of the non-symmetric reversible Markov chain associated
with the graph is
$\tilde{\adjmat}_R = \degmat^{-1}  \adjmat = \degmat^{-1/2}  \tilde{\adjmat}  \degmat^{1/2}$.

%
\subsection{Variants of the Graph Laplacian Matrix}
\label{Sharma_subsection:Laplacians}
We can now build the concept of the \textit{graph Laplacian operator}\index{graph Laplacian}. We
consider the following variants of the Laplacian matrix
\cite{Chung97,Luxburg2007,grady2010discrete}:
\begin{itemize}
\item The \textit{unnormalized Laplacian}\indexsub{unnormalized}{graph Laplacian} which is also referred to as
  the \textit{combinatorial Laplacian}\indexsub{combinatorial}{graph Laplacian} $\lapmat$,
\item the \textit{normalized Laplacian}\indexsub{normalized}{graph Laplacian}  $\normlapmat$, and
\item the \textit{random-walk Laplacian}\indexsub{random-walk}{graph Laplacian} $\normlapmat_R$ also referred to as the
\textit{discrete Laplace operator}.
\end{itemize}
In more detail we have:
\begin{eqnarray}
\label{Sharma_eq:L-combinatorial}
\lapmat &= &  \degmat-\adjmat\\
\label{Sharma_eq:L-normalized}
\normlapmat &= &
 \degmat^{-1/2}  \lapmat  \degmat^{-1/2} = 
\mat{I}-\tilde{\adjmat} \\
\label{Sharma_eq:L-random}
\normlapmat_R &=&
 \degmat^{-1}  \lapmat =  
\mat{I}-\tilde{\adjmat}_R
\end{eqnarray}
with the following relations between these matrices:
\begin{eqnarray}
\label{Sharma_eq:comb-from-rest}
\lapmat &=& \degmat^{1/2}  \normlapmat \degmat^{1/2} = \degmat \normlapmat_R\\
\label{Sharma_eq:normal-from-rest}
\normlapmat &=& \degmat^{-1/2}  \lapmat \degmat^{-1/2} =  \degmat^{1/2}
\normlapmat_R \degmat^{-1/2}\\
\label{Sharma_eq:random-from-rest}
\normlapmat_R &=& \degmat^{-1/2} \normlapmat \degmat^{1/2} =  \degmat\inverse \lapmat.
\end{eqnarray}
\section{Spectral Graph Isomorphism} 
\label{Sharma_section:SpectralGraphIsomorphism}  
%

Let $\G_A$ and  $\G_B$ be two \textit{undirected weighted graphs} with the same
number of nodes, $n$, and let $\adjmat_A$ and $\adjmat_B$ be their adjacency
matrices. They are real-symmetric matrices. In the general case, the
number $r$ of distinct eigenvalues of these matrices is smaller than $n$. The standard spectral methods
only apply to those graphs whose adjacency matrices have $n$
distinct eigenvalues (each eigenvalue has multiplicity one), which
implies 
that the eigenvalues can be ordered.

Graph isomorphism\index{graph isomorphism} \cite{GodsilRoyle2001} can be written as the following minimization problem:
\begin{equation}\label{Sharma_eq:graph-isomorphism}
\mat{P}^{\star} = \arg \min _{\mat{P}} \frobenius {\adjmat_A - \mat{P} \adjmat_B \mat{P}\tp}^2
\end{equation}
where $\mat{P}$ is an $n\times n$ permutation matrix (see
appendix~\ref{Sharma_appendix:permutation}) with $\mat{P}^{\star}$ as the desired vertex-to-vertex permutation matrix and $\frobenius{\bullet}$ is the
Frobenius norm defined by (see appendix~\ref{Sharma_appendix:Frobenius}):
\begin{equation}\label{Sharma_eq:Frobenius}
\| \adjmat \|^2_F = \langle \adjmat,\adjmat \rangle= \sum_{i=1}^{n}\sum_{j=1}^{n} {w}_{ij}^2 = \trace(\adjmat\tp\adjmat) 
\end{equation}
Let:
\begin{eqnarray}
\label{Sharma_eq:A-eigen}
\adjmat_A&=&\mat{U}_A\mat{\Lambda}_A\mat{U}_A\tp\\
\label{Sharma_eq:B-eigen}
\adjmat_B&=&\mat{U}_B\mat{\Lambda}_B\mat{U}_B\tp 
\end{eqnarray}
be the eigen-decompositions
of the two matrices with $n$ eigenvalues $\mat{\Lambda}_A=\diag[\alpha_i]$ and
$\mat{\Lambda}_B=\diag[\beta_i]$ and $n$ orthonormal eigenvectors, the
column vectors of $\mat{U}_A$ and $\mat{U}_B$.

\subsection{An Exact Spectral Solution}
\label{Sharma_subsection:exact-solution}

If there exists a vertex-to-vertex correspondence that makes (\ref{Sharma_eq:graph-isomorphism}) equal to $0$, we have:
\begin{equation}\label{Sharma_eq:exact-iso}
\adjmat_A = \mat{P}^{\star}  \adjmat_B{ \mat{P}^{\star} }\tp.
\end{equation}

This implies that the adjacency matrices of the two graphs should have
the same eigenvalues. Moreover, if the eigenvalues are non null and,
the matrices $\mat{U}_A$ and $\mat{U}_B$ have full rank and are
uniquely defined by their $n$ orthonormal column vectors (which are
the eigenvectors of $\adjmat_A$ and $\adjmat_B$), then 
$\alpha_i=\beta_i, \forall i,\;1\leq i \leq n$ and
$\mat{\Lambda}_A=\mat{\Lambda}_B$. From (\ref{Sharma_eq:exact-iso})
and using the eigen-decompositions of the two graph matrices we obtain:
\begin{equation}
\mat{\Lambda}_A = \mat{U}_A\tp \mat{P}^{\star}  \mat{\breve{U}}_B \mat{\Lambda}_B  \mat{\breve{U}}_B\tp{\mat{P}^{\star}} \tp\mat{U}_A = \mat{\Lambda}_B,
\end{equation}
where the matrix $\mat{\breve{U}}_B$ is defined by:
\begin{equation}
\mat{\breve{U}}_B = \mat{U}_B \mat{S}.
\end{equation}
Matrix $\mat{S}=\diag[s_i]$, with $s_i=\pm 1$, is referred to
as a sign matrix with the property $\mat{S}^2=\mat{I}$. Post
multiplication of $\mat{U}_B$ with a sign matrix
takes into account the fact that the eigenvectors (the column vectors
of $\mat{U}_B$) are only defined up to a sign.
Finally we obtain the following permutation matrix:
\begin{equation}\label{Sharma_eq:exact-perm}
\mat{P}^{\star}  = \mat{U}_B \mat{S} \mat{U}_A\tp.
\end{equation}
Therefore, one may notice that there are as many solutions as the cardinality
of the set of matrices $\mat{S}_n$, i.e., $|\mat{S}_n|=2^n$, and that \textit{not all of these solutions
correspond to a permutation matrix}.
This means that there exist some matrices $\mat{S}^{\star}$ that exactly make
$\mat{P}^{\star}$ a permutation matrix. Hence, all those permutation
matrices that satisfy (\ref{Sharma_eq:exact-perm}) are solutions of the exact
graph isomorphism problem.
Notice that once the permutation has been estimated, one can write that the rows of $\mat{U}_B$ can be
aligned with the rows of $\mat{U}_A$:
\begin{equation}
 \label{Sharma_eq:exact-alignment}
 \mat{U}_A = \mat{P}^{\star}   \mat{U}_B \mat{S}^{\star}.
 \end{equation}
The rows of $\mat{U}_A$ and of $\mat{U}_B$ can be interpreted as 
isometric embeddings of the two graph vertices: A vertex $v_i$ of
$\G_A$ has as
coordinates the $i^\text{th}$ row of $\mat{U}_A$. This means that the
spectral graph
isomorphism\index{spectral graph isomorphism} problem becomes a point registration\index{point registration} problem, where graph
vertices are represented by points in $\mathbbm{R}^n$.
To conclude, the exact graph isomorphism problem has a spectral
solution based on: (i)~the eigen-decomposition of the two graph matrices, (ii)~the
ordering of their eigenvalues, and (iii)~the choice of a sign for
each eigenvector.

\subsection{The Hoffman-Wielandt Theorem}
\label{Sharma_subsection:Hoffman}

The Hoffman-Wielandt theorem\index{Hoffman-Wielandt theorem} \cite{HoffmanWielandt53,Wilkinson65} is
the fundamental building block of spectral graph isomorphism\index{spectral graph isomorphism}. The
theorem holds for normal matrices; Here, we restrict the analysis to real symmetric
matrices, although the generalization to Hermitian matrices is
straightforward:
\begin{theorem}
\label{Sharma_theorem:hoffman-wielandt}
(Hoffman and Wielandt)
If $\adjmat_A$ and $\adjmat_B$ are real-symmetric matrices, and
if $\alpha_i$ and $\beta_i$
are their eigenvalues  arranged in increasing order, $\alpha_1 \leq
\hdots \leq \alpha_i \leq \hdots \leq \alpha_n$ and $\beta_1 \leq \hdots \leq \beta_i \leq \hdots \leq \beta_n$,
then
\begin{equation}
\label{Sharma_eq:Wielandt-Hoffman}
\sum_{i=1}^{n}(\alpha_i-\beta_i)^2 \leq \frobenius{\adjmat_A - \adjmat_B}^2.
\end{equation}
\end{theorem}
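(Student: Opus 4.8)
The plan is to expand the squared Frobenius norm on the right-hand side of (\ref{Sharma_eq:Wielandt-Hoffman}), exploit the orthogonal invariance of this norm through the two eigen-decompositions (\ref{Sharma_eq:A-eigen})--(\ref{Sharma_eq:B-eigen}), and then reduce the resulting continuous optimization over orthogonal matrices to a combinatorial statement about permutations. First I would write, using the inner-product form (\ref{Sharma_eq:Frobenius}),
$$\frobenius{\adjmat_A - \adjmat_B}^2 = \trace(\adjmat_A\tp\adjmat_A) - 2\trace(\adjmat_A\tp\adjmat_B) + \trace(\adjmat_B\tp\adjmat_B),$$
and use that the trace is invariant under orthogonal similarity to identify $\trace(\adjmat_A\tp\adjmat_A) = \sum_i \alpha_i^2$ and $\trace(\adjmat_B\tp\adjmat_B) = \sum_i \beta_i^2$. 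The whole difficulty is concentrated in the cross term: substituting the eigen-decompositions and setting $\mat{Q} = \mat{U}_A\tp\mat{U}_B$, which is orthogonal as a product of orthogonal matrices, I would rewrite
$$\trace(\adjmat_A\tp\adjmat_B) = \trace(\mat{\Lambda}_A \mat{Q} \mat{\Lambda}_B \mat{Q}\tp) = \sum_{i,j} \alpha_i \beta_j\, q_{ij}^2.$$

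The main obstacle is to bound this double sum uniformly over all admissible $\mat{Q}$. The key observation is that the matrix $\mat{M}$ with entries $m_{ij} = q_{ij}^2$ is \emph{doubly stochastic}: since the rows and columns of the orthogonal matrix $\mat{Q}$ all have unit Euclidean norm, every row and every column of $\mat{M}$ has nonnegative entries summing to one. Consequently, the quantity $\sum_{i,j}\alpha_i\beta_j\, q_{ij}^2$ is bounded above by the maximum of the linear functional $\sum_{i,j}\alpha_i\beta_j\, m_{ij}$ over the entire Birkhoff polytope of doubly stochastic matrices. This is where the geometry does the work, since a linear functional on a convex polytope is maximized at a vertex.

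The remaining steps are then standard. By the Birkhoff--von Neumann theorem this polytope is the convex hull of the permutation matrices, so the maximum is attained at a permutation matrix, reducing the bound to $\max_\pi \sum_i \alpha_i \beta_{\pi(i)}$. Finally, because the eigenvalues $\alpha_i$ and $\beta_i$ are \emph{both} arranged in increasing order, the rearrangement inequality shows that this maximum is achieved by the identity permutation, yielding $\sum_i \alpha_i\beta_i$. Substituting back gives
$$\frobenius{\adjmat_A - \adjmat_B}^2 \geq \sum_i \alpha_i^2 - 2\sum_i \alpha_i\beta_i + \sum_i \beta_i^2 = \sum_{i=1}^n (\alpha_i - \beta_i)^2,$$
which is precisely (\ref{Sharma_eq:Wielandt-Hoffman}). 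I expect the doubly-stochastic reduction, together with the appeal to Birkhoff--von Neumann, to be the conceptual heart of the argument; the orthogonal-invariance bookkeeping and the rearrangement inequality are routine by comparison.
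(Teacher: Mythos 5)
Your proposal is correct and follows essentially the same route as the paper's proof: after the orthogonal-invariance bookkeeping both arguments arrive at the identical expression $\sum_i\alpha_i^2+\sum_j\beta_j^2-2\sum_{i,j}\alpha_i\beta_j z_{ij}^2$ with $z_{ij}^2$ doubly stochastic, then invoke Birkhoff's theorem to push the maximization to a permutation matrix and a rearrangement argument to identify that permutation with the identity. The only cosmetic difference is that you expand the trace of $(\adjmat_A-\adjmat_B)^2$ directly, whereas the paper first applies unitary invariance to $\mat{\Lambda}_A\mat{Z}-\mat{Z}\mat{\Lambda}_B$ to get $\sum_{i,j}(\alpha_i-\beta_j)^2 z_{ij}^2$ and then expands.
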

\begin{proof}
The proof is derived from \cite{Wilkinson70,HornJohnson94}. Consider
the eigen-decompositions of matrices $\adjmat_A$ and $\adjmat_B$, (\ref{Sharma_eq:A-eigen}), (\ref{Sharma_eq:B-eigen}).
Notice that for the time being we are free to prescribe the ordering of the eigenvalues
$\alpha_i$ and $\beta_i$ and hence the ordering of the column vectors
of matrices $\mat{U}_A$ and $\mat{U}_B$. By combining (\ref{Sharma_eq:A-eigen}) and (\ref{Sharma_eq:B-eigen}) we write:
\begin{equation}
\mat{U}_A\mat{\mat{\Lambda}}_A\mat{U}_A\tp - \mat{U}_B\mat{\mat{\Lambda}}_B\mat{U}_B\tp = \adjmat_A - \adjmat_B
\end{equation}
or, equivalently:
\begin{equation}
\mat{\mat{\Lambda}}_A\mat{U}_A\tp\mat{U}_B - \mat{U}_A\tp\mat{U}_B\mat{\mat{\Lambda}}_B = \mat{U}_A\tp (\adjmat_A - \adjmat_B)\mat{U}_B.
\end{equation}
By the unitary-invariance of the Frobenius norm (see appendix~\ref{Sharma_appendix:Frobenius} ) and with the notation $\mat{Z} = \mat{U}_A\tp\mat{U}_B$ we obtain:
\begin{equation}
\frobenius{\mat{\mat{\Lambda}}_A \mat{Z} - \mat{Z} \mat{\mat{\Lambda}}_B}^2 = \frobenius{\adjmat_A - \adjmat_B}^2,
\end{equation}
which is equivalent to:
\begin{equation}
\label{Sharma_eq:frobenius-difference}
 \sum_{i=1}^{n}\sum_{j=1}^{n} ( \alpha_i-\beta_j)^2 z_{ij}^2= \frobenius{\adjmat_A - \adjmat_B}^2.
\end{equation}
The coefficients $x_{ij} = z_{ij}^2$ can be viewed as the entries of
a doubly-stochastic matrix $\mat{X}$: $x_{ij} \geq 0,
\sum_{i=1}^{n}x_{ij}=1, \sum_{j=1}^{n}x_{ij}=1$. Using these
properties, we obtain:
\begin{eqnarray}
 \sum_{i=1}^{n}\sum_{j=1}^{n} ( \alpha_i-\beta_j)^2 z_{ij}^2 &=&
 \sum_{i=1}^{n}  \alpha_i^2 + \sum_{j=1}^{n}\beta_j^2 - 2
 \sum_{i=1}^{n}\sum_{j=1}^{n} z_{ij}^2 \alpha_i\beta_j \nonumber \\ 
\label{Sharma_eq:mydevelopment}
&\geq&
 \sum_{i=1}^{n}  \alpha_i^2 + \sum_{j=1}^{n}\beta_j^2 - 2 \max_{Z}\left\{
 \sum_{i=1}^{n}\sum_{j=1}^{n} z_{ij}^2 \alpha_i\beta_j\right\}.
\end{eqnarray}

Hence, the minimization of (\ref{Sharma_eq:frobenius-difference}) is
equivalent to the maximization of the last term in
(\ref{Sharma_eq:mydevelopment}). We can modify our maximization problem to
admit all the doubly-stochastic matrices. In this way we seek an
extremum over a convex compact set. The maximum over this compact set
is larger than or equal to our maximum:
\begin{equation}
\max_{Z\in\mathcal{O}_n}\left\{
 \sum_{i=1}^{n}\sum_{j=1}^{n} z_{ij}^2 \alpha_i\beta_j\right\} \leq 
\max_{X\in\mathcal{D}_n}\left\{
 \sum_{i=1}^{n}\sum_{j=1}^{n} x_{ij} \alpha_i\beta_j\right\}
\end{equation}
where $\mathcal{O}_n$ is the set of orthogonal matrices and
$\mathcal{D}_n$ is the set of  doubly stochastic matrices (see appendix~\ref{Sharma_appendix:permutation}).
Let $c_{ij}=\alpha_i\beta_j$ and hence one can write that the right
term in the equation above as the dot-product of two matrices:
\begin{equation}
\langle \mat{X},\mat{C} \rangle =  \trace(\mat{X}\mat{C}) =  \sum_{i=1}^{n}\sum_{j=1}^{n} x_{ij}
c_{ij}.
\end{equation}
\begin{figure}[h!t]
\centering
\includegraphics[width=0.60\columnwidth]{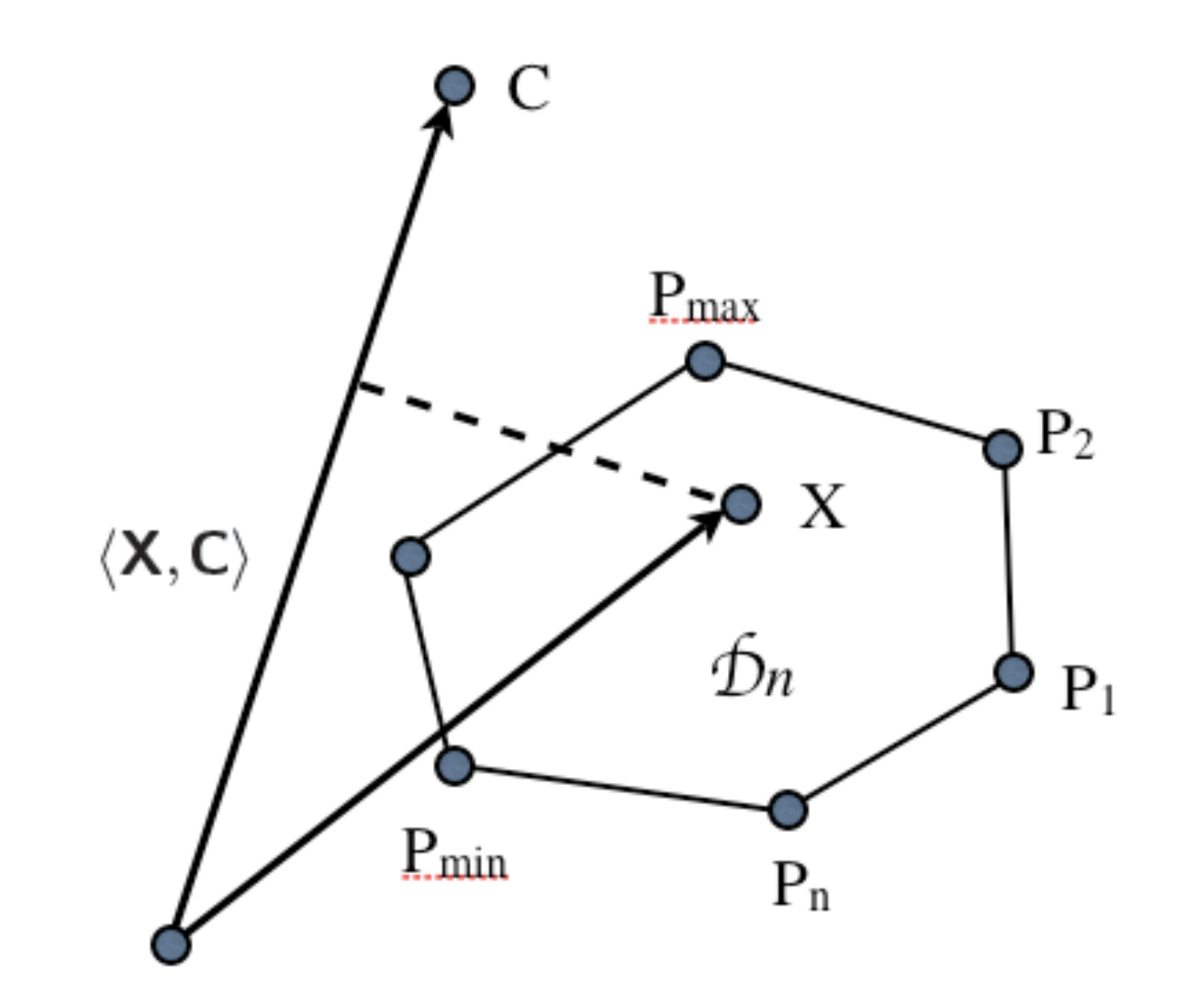}
\caption{This figure illustrates the maximization of the dot-product
$\langle \mat{X},\mat{C} \rangle$. The two matrices can be viewed
as vectors of dimension $n^2$. Matrix $\mat{X}$ belongs to a
compact convex set whose extreme points are the permutation
matrices $\mat{P}_1,\mat{P}_2,\hdots,\mat{P}_n$. Therefore, the
projection of this set (i.e.,
$\mathcal{D}_n$) onto $\mat{C}$ has projected permutation matrices at its extremes, namely $\langle \mat{P}_{\min},\mat{X}\rangle$ and $\langle \mat{P}_{\max},\mat{X}\rangle$ in this example.
}
\label{figure:hoffman}
\end{figure}
Therefore, this expression can be interpreted as the projection of
$\mat{X}$ onto $\mat{C}$, see figure~\ref{figure:hoffman}.
The Birkhoff theorem (appendix~\ref{Sharma_appendix:permutation}) tells us
that the set $\mathcal{D}_n$ of doubly stochastic matrices is a compact convex set.
We obtain that the extrema (minimum and maximum) of the projection of $\mat{X}$ onto $\mat{C}$
occur at the projections of one of the extreme points of this convex set, which correspond to permutation
matrices. Hence, the maximum of $\langle \mat{X},\mat{C} \rangle$ is $\langle \mat{P}_{\max},\mat{X}\rangle$ and we obtain:
\begin{equation}
\max_{X\in\mathcal{D}_n}\left\{
 \sum_{i=1}^{n}\sum_{j=1}^{n} x_{ij} \alpha_i\beta_j\right\} = \sum_{i=1}^{n} \alpha_i\beta_{\pi(i)}.
\end{equation}
By substitution in (\ref{Sharma_eq:mydevelopment}) we obtain:
\begin{equation}
 \sum_{i=1}^{n}\sum_{j=1}^{n} ( \alpha_i-\beta_j)^2 z_{ij}^2 \geq
 \sum_{i=1}^{n} ( \alpha_i-\beta_{\pi(i)})^2. 
\end{equation}
If the eigenvalues are in increasing order then
the permutation that satisfies theorem \ref{Sharma_eq:Wielandt-Hoffman} is the identity matrix,
i.e., $\pi(i) = i$. Indeed, let's assume that for some indices $k$ and
$k+1$ we have: $\pi(k)=k+1$ and $\pi(k+1)=k$. Since $\alpha_k\leq\alpha_{k+1}$ and $\beta_k\leq\beta_{k+1}$, the following inequality
holds:
\begin{equation}
(\alpha_k - \beta_k)^2 + (\alpha_{k+1} - \beta_{k+1})^2 \leq
(\alpha_k - \beta_{k+1})^2 + (\alpha_{k+1} - \beta_{k})^2
\end{equation}
and hence (\ref{Sharma_eq:Wielandt-Hoffman}) holds.  $\quad\blacksquare$
%
%
\end{proof}

\begin{corollary}
\label{corollary:first-hoffman-wielandt}
The inequality (\ref{Sharma_eq:Wielandt-Hoffman}) becomes an equality when the eigenvectors of $\adjmat_A$ are aligned with the eigenvectors of $\adjmat_B$ up to a sign ambiguity:
\begin{equation}
\label{Sharma_eq:first-corollary}
\mat{U}_B = \mat{U}_A \mat{S}.
\end{equation}
\end{corollary}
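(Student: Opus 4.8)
The plan is to read off the equality from the single identity that already does all the work in the proof of Theorem~\ref{Sharma_theorem:hoffman-wielandt}, namely the exact expression
\[
\frobenius{\adjmat_A - \adjmat_B}^2 = \sum_{i=1}^{n}\sum_{j=1}^{n}(\alpha_i - \beta_j)^2 z_{ij}^2, \qquad \mat{Z} = \mat{U}_A\tp\mat{U}_B .
\]
Everything after this point in the theorem's proof was inequality bookkeeping (passing to doubly-stochastic matrices and invoking Birkhoff); the corollary simply asks what happens when the matrix $\mat{Z}$ is as special as it can be. So the first and essentially only step is to substitute the alignment hypothesis $\mat{U}_B = \mat{U}_A\mat{S}$ into the definition of $\mat{Z}$.

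First I would compute $\mat{Z} = \mat{U}_A\tp\mat{U}_B = \mat{U}_A\tp\mat{U}_A\mat{S} = \mat{S}$, using the orthonormality $\mat{U}_A\tp\mat{U}_A = \mat{I}$. Since $\mat{S} = \diag[s_i]$ is diagonal with $s_i = \pm 1$, its entries satisfy $z_{ij} = s_i\delta_{ij}$ and hence $z_{ij}^2 = s_i^2\delta_{ij} = \delta_{ij}$. Substituting these into the identity above collapses the double sum onto its diagonal, giving $\frobenius{\adjmat_A - \adjmat_B}^2 = \sum_{i=1}^{n}(\alpha_i - \beta_i)^2$, which is exactly the equality case of (\ref{Sharma_eq:Wielandt-Hoffman}).

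As a cross-check I would also derive the same conclusion directly: the hypothesis $\mat{U}_B = \mat{U}_A\mat{S}$ forces $\adjmat_B = \mat{U}_A\mat{S}\mat{\Lambda}_B\mat{S}\mat{U}_A\tp = \mat{U}_A\mat{\Lambda}_B\mat{U}_A\tp$, because the diagonal matrices $\mat{S}$ and $\mat{\Lambda}_B$ commute and $\mat{S}^2 = \mat{I}$; thus $\adjmat_A$ and $\adjmat_B$ are simultaneously diagonalized by $\mat{U}_A$, and $\adjmat_A - \adjmat_B = \mat{U}_A(\mat{\Lambda}_A - \mat{\Lambda}_B)\mat{U}_A\tp$. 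Unitary invariance of the Frobenius norm then yields $\frobenius{\adjmat_A - \adjmat_B}^2 = \frobenius{\mat{\Lambda}_A - \mat{\Lambda}_B}^2 = \sum_{i}(\alpha_i-\beta_i)^2$, matching the first computation.

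The argument is essentially immediate, so there is no hard obstacle; the only point requiring care is consistency of the eigenvalue ordering. The right-hand side $\sum_{i}(\alpha_i-\beta_i)^2$ pairs the $i$-th eigenvalue of $\adjmat_A$ with the $i$-th eigenvalue of $\adjmat_B$, and this pairing is precisely the one induced by aligning the $i$-th columns of $\mat{U}_A$ and $\mat{U}_B$ through $\mat{U}_B = \mat{U}_A\mat{S}$. I would therefore note explicitly that the column ordering of $\mat{U}_A$ and $\mat{U}_B$ (equivalently the ordering of $\mat{\Lambda}_A$ and $\mat{\Lambda}_B$) is the common increasing order assumed in the theorem, so that the single-index sum obtained by collapsing $\mat{Z} = \mat{S}$ really is the left-hand side of (\ref{Sharma_eq:Wielandt-Hoffman}).
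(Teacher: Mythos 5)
Your proof is correct and rests on the same pivot as the paper's: the identity $\sum_{i,j}(\alpha_i-\beta_j)^2 z_{ij}^2 = \frobenius{\adjmat_A-\adjmat_B}^2$ with $\mat{Z}=\mat{U}_A\tp\mat{U}_B$, and the observation that $\mat{U}_B=\mat{U}_A\mat{S}$ forces $\mat{Z}=\mat{S}$ so the double sum collapses to $\sum_i(\alpha_i-\beta_i)^2$. The paper phrases the argument in the reverse direction (the minimum is attained at $\mat{X}=\mat{I}$, hence $\mat{Z}=\mat{S}$), but the content is the same, and your direct cross-check via simultaneous diagonalization is a harmless bonus rather than a different method.
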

\begin{proof}
Since the minimum of (\ref{Sharma_eq:frobenius-difference}) is achieved for $\mat{X}=\mat{I}$ and since the entries of $\mat{X}$ are $z_{ij}^2$, we have that $z_{ii}=\pm 1$, which corresponds to $\mat{Z}=\mat{S}$.  $\quad\blacksquare$
\end{proof}
\begin{corollary}
\label{corollary:hoffman-wielandt}
If $\mat{Q}$ is an orthogonal matrix, then
\begin{equation}\label{Sharma_eq:corollary-Wielandt-Hoffman}
\sum_{i=1}^{n}(\alpha_i-\beta_i)^2 \leq \frobenius{\adjmat_A - \mat{Q}\adjmat_B\mat{Q}\tp}^2.
\end{equation}
\end{corollary}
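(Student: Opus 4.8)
The plan is to reduce the corollary to a direct application of the Hoffman--Wielandt theorem (Theorem~\ref{Sharma_theorem:hoffman-wielandt}). The key observation is that conjugating a matrix by an orthogonal matrix is a spectrum-preserving operation that keeps the matrix real-symmetric. First I would introduce the shorthand $\mat{M} = \mat{Q}\adjmat_B\mat{Q}\tp$ and check that $\mat{M}$ is again real-symmetric: since $\adjmat_B\tp = \adjmat_B$, we have $\mat{M}\tp = \mat{Q}\adjmat_B\tp\mat{Q}\tp = \mat{Q}\adjmat_B\mat{Q}\tp = \mat{M}$. Next I would show that $\mat{M}$ has exactly the same eigenvalues as $\adjmat_B$. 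Starting from the eigen-decomposition (\ref{Sharma_eq:B-eigen}), namely $\adjmat_B = \mat{U}_B\mat{\Lambda}_B\mat{U}_B\tp$, I can write $\mat{M} = (\mat{Q}\mat{U}_B)\mat{\Lambda}_B(\mat{Q}\mat{U}_B)\tp$, where $\mat{Q}\mat{U}_B$ is orthogonal as a product of orthogonal matrices. Hence $\mat{M}$ has eigenvalues $\beta_1 \leq \hdots \leq \beta_n$, identical to those of $\adjmat_B$, only with rotated eigenvectors.

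With this reduction in place, I would apply Theorem~\ref{Sharma_theorem:hoffman-wielandt} to the pair of real-symmetric matrices $\adjmat_A$ and $\mat{M}$. Their eigenvalues, arranged in increasing order, are $\alpha_i$ and $\beta_i$ respectively, so the theorem immediately yields
\begin{equation}
\sum_{i=1}^{n}(\alpha_i-\beta_i)^2 \leq \frobenius{\adjmat_A - \mat{M}}^2 = \frobenius{\adjmat_A - \mat{Q}\adjmat_B\mat{Q}\tp}^2,
\end{equation}
which is exactly the claimed inequality (\ref{Sharma_eq:corollary-Wielandt-Hoffman}).

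There is essentially no hard step here: the whole content of the corollary is the invariance of the spectrum under orthogonal conjugation, which is precisely what lets me replace $\mat{Q}\adjmat_B\mat{Q}\tp$ by a symmetric matrix carrying the known eigenvalues $\beta_i$ and then quote the theorem verbatim. The only point deserving a moment's care is confirming that the left-hand side still pairs $\alpha_i$ with $\beta_i$ in the increasing-order matching rather than some permuted pairing; this is guaranteed automatically because Theorem~\ref{Sharma_theorem:hoffman-wielandt} already builds in the optimal increasing-order assignment, and conjugation by $\mat{Q}$ does not alter the multiset of eigenvalues of $\adjmat_B$. Thus the corollary follows without re-running any of the Birkhoff/doubly-stochastic argument used in the theorem itself.
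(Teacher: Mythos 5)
Your proposal is correct and follows exactly the paper's own argument: the paper likewise observes that $\mat{Q}\adjmat_B\mat{Q}\tp$ has eigen-decomposition $(\mat{Q}\mat{U}_B)\mat{\Lambda}_B(\mat{Q}\mat{U}_B)\tp$, hence the same eigenvalues as $\adjmat_B$, and then invokes Theorem~\ref{Sharma_theorem:hoffman-wielandt}. You simply spell out the symmetry check and the orthogonality of $\mat{Q}\mat{U}_B$ more explicitly than the paper does.
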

\begin{proof}
Since the eigen-decomposition of matrix $\mat{Q}\adjmat_B\mat{Q}\tp$ is $(\mat{Q}\mat{U}_B)\mat{\mat{\Lambda}}_B(\mat{Q}\mat{U}_B)\tp$ and since it
has the same eigenvalues as  $\adjmat_B$, the inequality (\ref{Sharma_eq:corollary-Wielandt-Hoffman}) holds and hence corollary~\ref{corollary:hoffman-wielandt}. $\quad\blacksquare$
\end{proof}

These corollaries will be useful in the case of spectral graph matching methods presented below. 

\subsection{Umeyama's Method}
\label{Sharma_subsection:UmeyamaMethod}
The exact spectral matching solution presented in section~\ref{Sharma_subsection:exact-solution} 
finds a permutation matrix satisfying (\ref{Sharma_eq:exact-perm}). This requires an exhaustive 
search over the space of all possible $2^n$ matrices. Umeyama's method presented in ~\cite{Umeyama88} 
proposes a relaxed solution to this problem as outlined below. 

Umeyama~\cite{Umeyama88} addresses the problem of 
\textit{weighted graph matching}\index{weighted graph matching} within the 
framework of spectral graph theory. He proposes two methods, the first
for \textit{undirected weighted graphs} and the second for
\textit{directed weighted graphs}. The adjacency matrix is used in
both cases. Let's consider the case of undirected graphs.
The eigenvalues are (possibly with multiplicities):
\begin{eqnarray}
\label{Sharma_eq:order-alpha}
\adjmat_A: & \alpha_1 \leq \hdots \leq \alpha_i \leq \hdots \leq \alpha_n\\
\label{Sharma_eq:order-beta}
\adjmat_B: & \beta_1 \leq \hdots \leq \beta_i \leq \hdots \leq \beta_n.
\end{eqnarray}
%
%
%
\begin{theorem}
\label{Sharma_theorem:umeyama}
(Umeyama)
If $\adjmat_A$ and $\adjmat_B$ are real-symmetric matrices with $n$ distinct eigenvalues (that can be ordered),
$\alpha_1 < \hdots < \alpha_i < \hdots < \alpha_n$ and $\beta_1 < \hdots < \beta_i < \hdots < \beta_n$, the minimum of :
\begin{equation}\label{Sharma_eq:Umeyama-minimizer-function}
J(\mat{Q}) = \frobenius{\adjmat_A - \mat{Q}\adjmat_B\mat{Q}\tp}^2
\end{equation}
is achieved for:
\begin{equation}\label{Sharma_eq:approx-isomorphism}
\mat{Q}^{\star} =  \mat{U}_A \mat{S} \mat{U}_B\tp
\end{equation}
and hence (\ref{Sharma_eq:corollary-Wielandt-Hoffman}) becomes an equality:
\begin{equation}\label{Sharma_eq:umeyama}
\sum_{i=1}^{n}(\alpha_i-\beta_i)^2 = \frobenius{\adjmat_A - \mat{Q}^{\star}\adjmat_B{\mat{Q}^{\star}}\tp}^2.
\end{equation}
\end{theorem}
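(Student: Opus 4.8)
The plan is to combine the lower bound already furnished by Corollary~\ref{corollary:hoffman-wielandt} with a direct evaluation of $J$ at the proposed minimizer, and to show that the two coincide. Corollary~\ref{corollary:hoffman-wielandt} asserts that $\sum_{i=1}^{n}(\alpha_i-\beta_i)^2 \leq J(\mat{Q})$ for \emph{every} orthogonal matrix $\mat{Q}$, so this quantity is a uniform lower bound on the objective. It therefore suffices to exhibit a single orthogonal $\mat{Q}^{\star}$ at which $J(\mat{Q}^{\star}) = \sum_{i=1}^{n}(\alpha_i-\beta_i)^2$; such a $\mat{Q}^{\star}$ is then automatically a minimizer and (\ref{Sharma_eq:umeyama}) holds as an equality.

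First I would substitute $\mat{Q}^{\star}=\mat{U}_A \mat{S} \mat{U}_B\tp$ into $\mat{Q}^{\star}\adjmat_B{\mat{Q}^{\star}}\tp$ and simplify. Writing ${\mat{Q}^{\star}}\tp = \mat{U}_B \mat{S} \mat{U}_A\tp$ (using $\mat{S}\tp=\mat{S}$), inserting the eigen-decomposition (\ref{Sharma_eq:B-eigen}), and cancelling the two inner factors $\mat{U}_B\tp\mat{U}_B=\mat{I}$, the product telescopes to $\mat{Q}^{\star}\adjmat_B{\mat{Q}^{\star}}\tp = \mat{U}_A \mat{S}\mat{\Lambda}_B\mat{S} \mat{U}_A\tp$. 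Since $\mat{S}$ and $\mat{\Lambda}_B$ are both diagonal they commute, and $\mat{S}^2=\mat{I}$ gives $\mat{S}\mat{\Lambda}_B\mat{S}=\mat{\Lambda}_B$, whence $\mat{Q}^{\star}\adjmat_B{\mat{Q}^{\star}}\tp = \mat{U}_A \mat{\Lambda}_B \mat{U}_A\tp$. In words, conjugating $\adjmat_B$ by $\mat{Q}^{\star}$ transports its eigenvectors onto those of $\adjmat_A$ while leaving its eigenvalues $\beta_i$ untouched.

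Next I would form the residual and measure it. Subtracting the above from (\ref{Sharma_eq:A-eigen}) yields $\adjmat_A - \mat{Q}^{\star}\adjmat_B{\mat{Q}^{\star}}\tp = \mat{U}_A(\mat{\Lambda}_A-\mat{\Lambda}_B)\mat{U}_A\tp$. Invoking the unitary invariance of the Frobenius norm (appendix~\ref{Sharma_appendix:Frobenius}) strips off the orthogonal factors $\mat{U}_A$ and $\mat{U}_A\tp$, so that $J(\mat{Q}^{\star}) = \frobenius{\mat{\Lambda}_A-\mat{\Lambda}_B}^2 = \sum_{i=1}^{n}(\alpha_i-\beta_i)^2$, the two matrices being diagonal with entries in increasing order. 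This matches the lower bound exactly, so $\mat{Q}^{\star}$ attains the minimum and (\ref{Sharma_eq:corollary-Wielandt-Hoffman}) becomes an equality, as claimed.

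The algebra here is short; the step demanding care is the role of the distinctness hypothesis $\alpha_1<\cdots<\alpha_n$ and $\beta_1<\cdots<\beta_n$. This is precisely what guarantees that each eigenvector is determined up to its sign alone, so that the entire freedom in aligning the two eigenbases is captured by a sign matrix $\mat{S}$ rather than by arbitrary orthogonal mixing inside repeated-eigenvalue subspaces; it is also what ensures, through the increasing ordering used in Theorem~\ref{Sharma_theorem:hoffman-wielandt}, that the optimal eigenvalue pairing is the identity pairing $\alpha_i\leftrightarrow\beta_i$. Accordingly, I expect the main obstacle to be conceptual rather than computational: one must keep the eigenvalues in $\mat{\Lambda}_A$ and $\mat{\Lambda}_B$ sorted consistently so that the Hoffman--Wielandt bound is exactly $\sum_i(\alpha_i-\beta_i)^2$ and not some larger permuted sum, since it is the coincidence with this particular bound that certifies optimality.
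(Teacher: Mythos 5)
Your proof is correct and follows essentially the same route as the paper: both arguments rest on Corollary~\ref{corollary:hoffman-wielandt} as a uniform lower bound over the orthogonal group and then show that $\mat{Q}^{\star}=\mat{U}_A\mat{S}\mat{U}_B\tp$ attains it. The only (harmless) difference is that you certify attainment by directly evaluating $J(\mat{Q}^{\star})$ via the telescoping computation $\mat{Q}^{\star}\adjmat_B{\mat{Q}^{\star}}\tp=\mat{U}_A\mat{\Lambda}_B\mat{U}_A\tp$, whereas the paper instead invokes the equality condition of Corollary~\ref{corollary:first-hoffman-wielandt} (namely $\mat{Z}=\mat{U}_A\tp\mat{Q}^{\star}\mat{U}_B=\mat{S}$) and solves for $\mat{Q}^{\star}$; your version is, if anything, slightly more explicit and self-contained.
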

\begin{proof}
The proof is straightforward. By corollary \ref{corollary:hoffman-wielandt}, the Hoffman-Wielandt theorem applies to matrices $\adjmat_A$ and $ \mat{Q}\adjmat_B{\mat{Q}}\tp$. By corollary \ref{corollary:first-hoffman-wielandt}, the equality (\ref{Sharma_eq:umeyama}) is achieved for:
\begin{equation}
\mat{Z} = \mat{U}_A\tp\mat{Q}^{\star}\mat{U}_B = \mat{S}
\end{equation}
and hence (\ref{Sharma_eq:approx-isomorphism}) holds. $\quad\blacksquare$
\end{proof}
Notice that (\ref{Sharma_eq:approx-isomorphism}) can be written as:
\begin{equation}
\label{Sharma_eq:approx-isomorphism-more}
\mat{U}_A = \mat{Q}^{\star}  \mat{U}_B \mat{S}
\end{equation}
which is a \textit{relaxed} version of
(\ref{Sharma_eq:exact-alignment}): The permutation matrix in the exact
isomorphism case is replaced by an orthogonal matrix. 

\paragraph{A Heuristic for Spectral Graph Matching:}\index{spectral graph matching}

Let us consider again the exact solution outlined in
section~\ref{Sharma_subsection:exact-solution}. Umeyama suggests a heuristic
in order to avoid exhaustive search over all possible $2^n$ matrices
that satisfy (\ref{Sharma_eq:exact-perm}). One may easily notice that:
\begin{equation}
\frobenius{\mat{P} - \mat{U}_A\mat{S}\mat{U}_B\tp}^2 = 2n - 2
\trace(\mat{U}_A\mat{S}(\mat{P}\mat{U}_B)\tp).
\label{Sharma_eq:norm-trace}
\end{equation}
Using Umeyama's notations, $\mat{\bar{U}}_A=[|u_{ij}|],\mat{\bar{U}}_B=[|v_{ij}|]$ (the entries of $\mat{\bar{U}}_A$ are the absolute values of the entries of $\mat{U}_A$), one may further notice that:
\begin{equation}
\trace(\mat{U}_A\mat{S}(\mat{P}\mat{U}_B)\tp) = \sum_{i=1}^n\sum_{j=1}^n s_j
u_{ij} v_{\pi(i)j} \leq \sum_{i=1}^n\sum_{j=1}^n |u_{ij}|| 
v_{\pi(i)j}| = \trace(\mat{\bar{U}}_A\mat{\bar{U}}_B\tp\mat{P}\tp).
\label{Sharma_eq:Umeyama-heuristic}
\end{equation}

The minimization of (\ref{Sharma_eq:norm-trace}) is equivalent to the
maximization of (\ref{Sharma_eq:Umeyama-heuristic}) and the maximal value
that can be attained by the latter is $n$. Using the fact that both
$\mat{U}_A$ and $\mat{U}_B$ are orthogonal matrices, one can easily conclude
that:
\begin{equation}
 \trace(\mat{\bar{U}}_A\mat{\bar{U}}_B\tp\mat{P}\tp) \leq n.
\label{Sharma_eq:check-permutation}
\end{equation}
Umeyama concludes that when the two graphs are isomorphic, the optimum
permutation matrix maximizes
$\trace(\mat{\bar{U}}_A\mat{\bar{U}}_B\tp\mat{P}\tp)$ and this can be solved by the
Hungarian algorithm~\cite{burkard2009}.\index{Hungarian algorithm}

When the two graphs are not exactly isomorphic,
theorem~\ref{Sharma_theorem:hoffman-wielandt} and
theorem~\ref{Sharma_theorem:umeyama} allow us to relax the permutation matrices
to the group of orthogonal matrices. Therefore with similar arguments
as above we obtain:
\begin{equation}
\trace(\mat{U}_A\mat{S}\mat{U}_B\tp\mat{Q}\tp) \leq \trace(\mat{\bar{U}}_A\mat{\bar{U}}_B\tp\mat{Q}\tp) \leq n.
\label{Sharma_eq:check-orthogonal}
\end{equation}
The permutation matrix obtained with the Hungarian algorithm can be
used as an initial solution that can then be improved by some
hill-climbing or relaxation technique \cite{Umeyama88}.

The spectral matching solution presented in this section is not directly applicable to large graphs.
In the next section we introduce the notion of dimensionality reduction for graphs which will lead
to a tractable graph matching solution. 
\section{Graph Embedding and Dimensionality Reduction}
\label{Sharma_section:GraphMatchingwithDimReduction}
%
%
For \textit{large} and \textit{sparse} graphs, the results of
section~\ref{Sharma_section:SpectralGraphIsomorphism} and Umeyama's method (section~\ref{Sharma_subsection:UmeyamaMethod}) hold
only \textit{weakly}. Indeed, one cannot guarantee that all the eigenvalues
have multiplicity equal to one:
the presence of symmetries causes some of
eigenvalues to have an algebraic multiplicity greater than
one. Under these circumstances and due to numerical approximations,
it might not be possible to properly order the
eigenvalues. Moreover, for very large graphs with thousands of
vertices it is not practical to compute all its eigenvalue-eigenvector
pairs. This means that one has to devise a method that is able to
match shapes using a small set of eigenvalues and eigenvectors.

One elegant way to overcome this problem, is to reduce
the dimension of the eigenspace, along the line of spectral
dimensionality reductions techniques. The eigendecomposition 
of graph Laplacian matrices (introduced in section~\ref{Sharma_subsection:Laplacians}) 
is a popular choice for the purpose of dimensionality reduction~\cite{belkin2003laplacian}.
\subsection{Spectral Properties of the Graph Laplacian}
\label{Sharma_subsection:LaplacianSpectralProperties}

The spectral properties of the Laplacian matrices introduced in section~\ref{Sharma_subsection:Laplacians} 
have been thoroughly studied. They are summarized in table~\ref{Sharma_table:Laplacian-summary}. 
\begin{table*}[b!h!t!]
\begin{center}
\begin{tabular}{|l|l|l|l|}
\hline
Laplacian & Null space & Eigenvalues & Eigenvectors \\
\hline
$\lapmat=\mat{U}\mat{\Lambda}\mat{U}\tp$ & $\vec{u}_1=\mathbbm{1}$ &
$0=\lambda_1 < \lambda_2 \leq \hdots \leq \lambda_n $
& $\vec{u}_{i>1}\tp\mathbbm{1}=0, \vec{u}_{i}\tp\vec{u}_{j}=\delta_{ij}$\\
\hline
$\normlapmat=\tilde{\mat{U}}\mat{\Gamma}\tilde{\mat{U}}\tp$ & $\tilde{\vec{u}}_1=\degmat^{1/2}\mathbbm{1}$ &
$0=\gamma_1 < \gamma_2 \leq \hdots \leq \gamma_n$
& $\tilde{\vec{u}}_{i>1}\tp\degmat^{1/2}\mathbbm{1}=0, \tilde{\vec{u}}_{i}\tp\tilde{\vec{u}}_{j}=\delta_{ij}$\\
\hline
$\normlapmat_R=\mat{T}\mat{\Gamma}\mat{T}\inverse$,
$\mat{T}=\mat{D}^{-1/2}\tilde{\mat{U}}$ & $\vec{t}_1=\mathbbm{1}$ & 
$0=\gamma_1 < \gamma_2 \leq \hdots \leq \gamma_n$
& $\vec{t}_{i>1}\tp\degmat\mathbbm{1}=0, \vec{t}_{i}\tp\degmat\vec{t}_{j}=\delta_{ij}$\\
\hline
\end{tabular}
\end{center}
\caption{Summary of the spectral properties of the Laplacian
  matrices. Assuming a connected graph, the null eigenvalue
  ($\lambda_1,\gamma_1$) has
  multiplicity one. The first non null eigenvalue ($\lambda_2,\gamma_2$) is
  known as the Fiedler value\index{Fiedler value} and its multiplicity is, in general, equal
  to one. The associated eigenvector is denoted the Fiedler vector~\cite{Chung97}.\index{Fiedler vector}}
\label{Sharma_table:Laplacian-summary}
\end{table*}
We derive some subtle properties of the
combinatorial Laplacian\indexsub{combinatorial}{graph Laplacian} which will be
useful for the task of shape registration. In particular, we will show
that the eigenvectors of the combinatorial Laplacian can be
interpreted as directions of maximum variance (principal components)
of the associated embedded shape representation. We note that the
embeddings of the 
normalized and random-walk Laplacians have different spectral
properties which make them less interesting for shape registration,
i.e., Appendix~\ref{Sharma_appendix:Properties-normalized-laplacian}.

\paragraph*{The combinatorial Laplacian.}\indexsub{combinatorial}{graph Laplacian} Let $\lapmat=\mat{U}\mat{\Lambda}\mat{U}\tp$ be the spectral 
decomposition of the combinatorial Laplacian with
$\mat{U}\mat{U}\tp=\mat{I}$. Let $\mat{U}$ be written as:
\begin{equation}
\mat{U} = \left[
\begin{array}{ccccc}
u_{11} & \hdots & u_{1k} & \hdots & u_{1n} \\
\vdots &        & \vdots &        & \vdots \\
u_{n1} & \hdots & u_{nk} & \hdots & u_{nn}
\end{array} \right]
\end{equation}
Each column of $\mat{U}$, $\vec{u}_k= (u_{1k}\hdots
u_{ik} \hdots u_{nk})\tp$ is an eigenvector associated with the
eigenvalue $\lambda_k$. From the definition of $\lapmat$ in (\ref{Sharma_eq:L-combinatorial}) (see~\cite{belkin2003laplacian})
one can easily see that $\lambda_1=0$ and that $\vec{u}_1=\mathbbm{1}$ (a constant vector).
Hence, $\vec{u}_{k\geq 2}\tp\mathbbm{1}=0$ and by combining this with
$\vec{u}_{k}\tp\vec{u}_{k}=1$, we derive the following proposition:
\begin{proposition}
\label{Sharma_prop:eigenvector-constraints}
The components of the non-constant eigenvectors of the combinatorial Laplacian satisfy the following constraints:
\begin{eqnarray}
\label{Sharma_eq:sum-of-entries-1}
&\sum_{i=1}^n u_{ik} = 0,&  \forall k, 2 \leq k \leq n\\
\label{Sharma_eq:limits-of-entries-1}
&-1 < u_{ik} < 1,& \forall i,k, 1\leq i \leq n, 2 \leq k \leq n.
\end{eqnarray}
\end{proposition}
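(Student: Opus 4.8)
The plan is to derive both constraints directly from the orthonormality of the eigenbasis $\mat{U}$ together with the single structural fact that the null eigenvalue of $\lapmat$ is carried by the constant vector $\mathbbm{1}$. Nothing beyond these two ingredients is needed, so I would not invoke the weights $w_{ij}$ or the detailed combinatorics of the graph at all; the proposition is really a statement about unit-norm vectors orthogonal to $\mathbbm{1}$.

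For the first constraint, I would start from $\lapmat\mathbbm{1} = (\degmat - \adjmat)\mathbbm{1} = \vec{d} - \vec{d} = 0$, which shows that $\mathbbm{1}$ is an eigenvector of $\lapmat$ for the eigenvalue $0$; since the graph is connected this eigenvalue has multiplicity one (see table~\ref{Sharma_table:Laplacian-summary}), so we may take $\vec{u}_1$ proportional to $\mathbbm{1}$. Because $\lapmat$ is real-symmetric its eigenvectors form an orthonormal family, hence for every $k\geq 2$ we have $\vec{u}_k\tp\vec{u}_1 = 0$, and since $\vec{u}_1 \propto \mathbbm{1}$ this reads $\vec{u}_k\tp\mathbbm{1} = \sum_{i=1}^n u_{ik} = 0$. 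This establishes (\ref{Sharma_eq:sum-of-entries-1}) and is exactly the statement already anticipated in the paragraph preceding the proposition.

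For the second constraint, the non-strict bound is immediate: normalization gives $\|\vec{u}_k\|^2 = \sum_{i=1}^n u_{ik}^2 = 1$, so each individual term satisfies $u_{ik}^2 \leq 1$, i.e., $-1 \leq u_{ik} \leq 1$. The only real content is upgrading this to a strict inequality, and this is where I would spend the effort. I would argue by contradiction: if $|u_{ik}| = 1$ for some index pair $(i,k)$ with $k\geq 2$, then the unit-norm identity forces $u_{jk} = 0$ for all $j \neq i$, so $\vec{u}_k = \pm\vec{e}_i$ is a standard basis vector. But then $\sum_{j=1}^n u_{jk} = u_{ik} = \pm 1 \neq 0$, which contradicts the sum-to-zero constraint (\ref{Sharma_eq:sum-of-entries-1}) just established. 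Hence $|u_{ik}| < 1$ for all $i$ and all $k\geq 2$, which is (\ref{Sharma_eq:limits-of-entries-1}).

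The main (and essentially only) obstacle is the strictness in the second bound: it does not follow from normalization alone but requires the interplay between the two constraints, namely that an axis-aligned unit eigenvector is ruled out precisely because its entries cannot sum to zero. I would also flag that connectivity enters only to guarantee that $\mathbbm{1}$ spans the whole null space, so that $\vec{u}_1$ can be chosen as the constant vector and the orthogonality argument of the first step applies uniformly to all remaining eigenvectors.
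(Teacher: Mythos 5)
Your proof is correct and follows the same route the paper takes: the zero-sum constraint comes from orthogonality of $\vec{u}_{k\geq 2}$ to the constant null eigenvector $\vec{u}_1=\mathbbm{1}$, and the entry bounds come from combining this with the normalization $\vec{u}_k\tp\vec{u}_k=1$. The paper leaves the strictness of the inequality implicit in the phrase ``by combining,'' whereas you spell out the contradiction argument (an entry of modulus one would force $\vec{u}_k=\pm\vec{e}_i$, violating the zero sum), which is a welcome clarification but not a different approach.
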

Assuming
a connected graph, $\lambda_1$ has multiplicity equal to one
\cite{Luxburg2007}. Let's organize the eigenvalues of $\lapmat$ in
increasing order: $0 = \lambda_1 <
\lambda_2 \leq \hdots \leq \lambda_n$. We prove the following
proposition \cite{Chung97}:
\begin{proposition}
\label{Sharma_prop:max-lambda}
For all $k\leq n$, we have $\lambda_k \leq 2 \max_i (d_i)$, where
$d_i$ is the degree of vertex $i$.
\end{proposition}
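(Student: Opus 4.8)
The plan is to bound only the largest eigenvalue $\lambda_n$ and then invoke the ordering $0=\lambda_1 < \lambda_2 \leq \hdots \leq \lambda_n$, which immediately gives $\lambda_k \leq \lambda_n$ for every $k$. Since $\lapmat$ is real-symmetric, I would use the variational (Rayleigh--Ritz) characterization of the top eigenvalue,
\begin{equation}
\lambda_n = \max_{\vec{x}\neq 0} \frac{\vec{x}\tp \lapmat \vec{x}}{\vec{x}\tp \vec{x}},
\end{equation}
so that it suffices to show the Rayleigh quotient never exceeds $2\max_i(d_i)$, independently of $\vec{x}$.

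The key algebraic step is the standard quadratic-form identity for the combinatorial Laplacian. Starting from $\lapmat=\degmat-\adjmat$ as in (\ref{Sharma_eq:L-combinatorial}), I would expand $\vec{x}\tp\lapmat\vec{x} = \sum_{i=1}^{n} d_i x_i^2 - \sum_{i=1}^{n}\sum_{j=1}^{n} w_{ij} x_i x_j$ and rewrite it, using $d_i=\sum_{j} w_{ij}$ and the symmetry $w_{ij}=w_{ji}$, as
\begin{equation}
\vec{x}\tp \lapmat \vec{x} = \frac{1}{2}\sum_{i=1}^{n}\sum_{j=1}^{n} w_{ij}(x_i - x_j)^2 .
\end{equation}
This exhibits the quadratic form as a nonnegative weighted sum of squared differences, which also re-confirms that $\lapmat$ is positive semi-definite.

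Next I would apply the elementary inequality $(x_i - x_j)^2 \leq 2(x_i^2 + x_j^2)$ termwise and simplify again with $\sum_j w_{ij}=d_i$, obtaining
\begin{equation}
\vec{x}\tp \lapmat \vec{x} \leq \sum_{i=1}^{n}\sum_{j=1}^{n} w_{ij}(x_i^2 + x_j^2) = 2\sum_{i=1}^{n} d_i x_i^2 \leq 2\max_i(d_i)\sum_{i=1}^{n} x_i^2 .
\end{equation}
Dividing by $\vec{x}\tp\vec{x}=\sum_i x_i^2$ shows the Rayleigh quotient is bounded by $2\max_i(d_i)$; taking the maximum over $\vec{x}$ yields $\lambda_n \leq 2\max_i(d_i)$, and the eigenvalue ordering then gives the claim for all $k$.

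There is no deep obstacle here; the one thing to watch carefully is the bookkeeping of the factors of two — the leading $\tfrac12$ in the quadratic-form identity must cancel against the factor $2$ coming from $(x_i-x_j)^2 \leq 2(x_i^2+x_j^2)$ to produce exactly the constant $2$ rather than a weaker bound. As a sanity check (and a shorter alternative route that sidesteps the Rayleigh quotient), I would note that Gershgorin's disc theorem applied to $\lapmat$ places every eigenvalue within a disc centered at $d_i$ with radius $\sum_{j\neq i} w_{ij}=d_i$, hence inside $[0,2d_i]$ for some $i$, which gives $\lambda_k \leq 2\max_i(d_i)$ directly.
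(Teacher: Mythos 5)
Your proof is correct and follows essentially the same route as the paper: the Rayleigh--Ritz characterization of $\lambda_n$, the quadratic-form identity $\vec{x}\tp\lapmat\vec{x}=\sum_{e_{ij}}w_{ij}(x_i-x_j)^2$, and the inequality $(a-b)^2\leq 2(a^2+b^2)$, differing only in the cosmetic choice of summing over ordered pairs with a factor $\tfrac12$ rather than over edges. The Gershgorin remark is a valid shortcut but not the paper's argument.
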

\begin{proof}
The largest eigenvalue of $\lapmat$ corresponds to the maximization of
the Rayleigh quotient, or
\begin{equation}
\lambda_n=\max_{\vec{u}}
\frac{\vec{u}\tp\lapmat\vec{u}}{\vec{u}\tp\vec{u}}.
\end{equation}
We have $\vec{u}\tp\lapmat\vec{u}=\sum_{e_{ij}}w_{ij}(u_i-u_j)^2$. From
  the inequality $(a-b)^2\leq 2(a^2+b^2)$ we obtain:
\begin{equation}
\lambda_n \leq \frac {2\sum_{e_{ij}}w_{ij}(u_i^2+u_j^2)}{\sum_i u_i^2}
  = \frac {2\sum_i d_i u_i^2}{\sum_i u_i^2} \leq 2\max_i(d_i). 
\quad\blacksquare
\end{equation}
\end{proof}

This ensures an upper limit on the eigenvalues of $\lapmat$. 
By omitting the zero eigenvalue and associated eigenvector, we can rewrite $\lapmat$ as:
\begin{equation}
\lapmat = \sum_{k=2}^{n} \lambda_k \vec{u}_k \vec{u}_k\tp.
\end{equation}
Each entry $u_{ik}$ of an eigenvector $\vec{u}_k$ can be interpreted as
a real-valued function that projects a graph vertex $v_i$ onto that
vector. The mean and variance of the set $\{u_{ik}\}_{i=1}^{n}$ are
therefore a
measure of how the graph \textit{spreads} when projected onto the
$k$-th eigenvector. This is clarified by the following result:
\begin{proposition}
\label{Sharma_prop:spread-of-u}
The \textit{mean} $\overline{u}_k$ and
the \textit{variance} $\sigma_{u_k}$ of an eigenvector $\vec{u}_k$. For $2 \leq k \leq
n,$ and $1\leq i \leq n$ we have
\begin{eqnarray}
\label{Sharma_eq:sum-of-entries-1}
&\overline{u}_k  = & \sum_{i=1}^n u_{ik} = 0\\
\label{Sharma_eq:variance-1}
&\sigma_{u_k}  = & \frac{1}{n} \sum_{i=1}^{n} (u_{ik} -
\overline{u}_k)^{2} = \frac{1}{n} 
\end{eqnarray}
\end{proposition}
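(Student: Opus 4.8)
The plan is to establish the two assertions separately, each following almost immediately from facts already in hand. First I would treat the mean. By definition the (sample) mean of the entries of $\vec{u}_k$ is $\overline{u}_k = \frac{1}{n}\sum_{i=1}^n u_{ik}$, and Proposition~\ref{Sharma_prop:eigenvector-constraints}, equation~(\ref{Sharma_eq:sum-of-entries-1}), already guarantees $\sum_{i=1}^n u_{ik}=0$ for every $k$ with $2\leq k\leq n$. Hence $\overline{u}_k=0$ with no further computation; this is simply the orthogonality of $\vec{u}_k$ to the constant eigenvector $\vec{u}_1=\mathbbm{1}$ restated in statistical language.

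For the variance, I would begin from the definition $\sigma_{u_k}=\frac{1}{n}\sum_{i=1}^n (u_{ik}-\overline{u}_k)^2$ and substitute $\overline{u}_k=0$ from the first step, which collapses it to $\sigma_{u_k}=\frac{1}{n}\sum_{i=1}^n u_{ik}^2$. The key observation is then that $\sum_{i=1}^n u_{ik}^2$ is precisely the squared Euclidean norm $\vec{u}_k\tp\vec{u}_k$ of the $k$-th column of $\mat{U}$. Since the spectral decomposition $\lapmat=\mat{U}\mat{\Lambda}\mat{U}\tp$ was taken with $\mat{U}\mat{U}\tp=\mat{I}$, the columns of $\mat{U}$ are orthonormal, so $\vec{u}_k\tp\vec{u}_k=1$. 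Therefore $\sigma_{u_k}=\frac{1}{n}$, independently of $k$.

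There is no genuine obstacle here: the proposition is essentially a reinterpretation of the two standard normalization conditions on the Laplacian eigenvectors---orthogonality to $\mathbbm{1}$ yields a zero mean, while unit norm yields variance $1/n$. The only point deserving a moment's care is the choice of conventions, namely that the variance carries the factor $\frac{1}{n}$ (rather than $\frac{1}{n-1}$) and that ``orthonormal'' is understood as unit-norm columns; both are fixed by the setup preceding the statement, so the constant $\frac{1}{n}$ is exactly what drops out.
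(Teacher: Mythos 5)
Your proof is correct and follows exactly the paper's own argument: the zero mean comes from $\vec{u}_{k\geq 2}\tp\mathbbm{1}=0$ and the variance $\frac{1}{n}$ from the unit-norm condition $\vec{u}_{k}\tp\vec{u}_{k}=1$. You simply spell out the one-line proof given in the paper in slightly more detail.
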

\begin{proof}
These results can be easily obtained from
$\vec{u}_{k\geq 2}\tp\mathbbm{1}=0$ and
$\vec{u}_{k}\tp\vec{u}_{k}=1$.  $\quad\blacksquare$
\end{proof}
These properties will be useful while aligning two Laplacian embeddings and thus registering two 3D shapes.

\subsection{Principal Component Analysis of a Graph Embedding}
\label{Sharma_subsection:GraphEmbeddingAndPCA}

The Moore-Penrose pseudo-inverse of the Laplacian can be written as:
\begin{eqnarray}
\lapmat^{\dag} &=& \mat{U} \mat{\Lambda}^{-1} \mat{U}\tp \nonumber \\ 
&=& (\mat{\Lambda}^{-\frac{1}{2}}\mat{U}\tp)\tp (\mat{\Lambda}^{-\frac{1}{2}}\mat{U}\tp)  \nonumber \\
&=& \mat{X}\tp\mat{X}
\label{Sharma_eq:Gram-matrix}
\end{eqnarray}
where $\mat{\Lambda}\inverse = \diag(0, 1/\lambda_2,\hdots,1/\lambda_n)$. 

The symmetric semi-definite positive matrix $\lapmat^{\dag}$ is a \textit{Gram} matrix\index{Gram matrix}
with the same eigenvectors as those of the graph Laplacian. When omitting the null eigenvalue and associated constant eigenvector,  $\mat{X}$ becomes
a $(n-1) \times n$ matrix whose columns are the coordinates of the
graph's vertices in an \textit{embedded (or feature) space}, i.e.,
$\mat{X}=[\vec{x}_1\hdots \vec{x}_j\hdots\vec{x}_n]$. It is interesting to note that 
the entries of $\lapmat^{\dag}$ may be viewed as \textit{kernel}
dot-products, or a Gram matrix \cite{Ham-Lee-ICML-2004}. 
The Gram-matrix representation allows us to embed the graph in an Euclidean feature-space 
where each vertex $v_j$ of the graph is a feature point represented as $\vec{x}_j$.

The left pseudo-inverse operator of the Laplacian $\lapmat$, satisfying $\lapmat^{\dag} \lapmat \vec{u} = \vec{u}$ 
for any $\vec{u} \bot \mathrm{null}(\lapmat)$, is also called the \emph{Green function} of the heat equation.
Under the assumption that the graph is connected and thus $\lapmat$ has an
eigenvalue $\lambda_1 = 0$ with multiplicity 1, we obtain:
\begin{equation}
\label{Sharma_eq:GreensFunction}
\lapmat^{\dag} = \sum_{k=2}^{n} \frac{1}{\lambda_k} \vec{u}_k \vec{u}_k\tp.
\end{equation}
The Green function is 
intimately related to random walks on graphs, and can be interpreted probabilistically as follows.
Given a Markov chain such that each graph vertex is the state, and the transition 
from vertex $v_i$ is possible to any adjacent
vertex $v_j \sim v_i$ with probability $w_{ij}/d_i$, the expected
number of steps required to reach vertex $v_j$ from $v_i$, called the
\emph{access} or \emph{hitting time} $O(v_i,v_j)$. 
The expected number of steps in a round trip from $v_i$ to $v_j$ is
called the \emph{commute-time distance}\index{commute-time distance}: $\CTD^2(v_i,v_j)=O(v_i,v_j)+O(v_j,v_i)$.
The commute-time distance \cite{QiuHancock2007PAMI} can be expressed in terms of the entries of $\lapmat^{\dag}$:
\begin{eqnarray}
\label{Sharma_eq:CTD-closedform}
\CTD^2(v_i,v_j) &=&  \vol(\G) (\lapmat^{\dag}(i,i) + \lapmat^{\dag}(j,j) - 2\lapmat^{\dag}(i,j)) \nonumber \\
&=& \vol(\G) \left(\sum_{k=2}^n \frac{1}{\lambda_k} \vec{u}_{ik}^2 +
  \sum_{k=2}^n \frac{1}{\lambda_k} \vec{u}_{jk}^2 - 2 \sum_{k=2}^n \frac{1}{\lambda_k} \vec{u}_{ik}\vec{u}_{jk}\right) \nonumber \\
&=& \vol(\G) \sum_{k=2}^n \left( \lambda_k^{-1/2}(\vec{u}_{ik} - \vec{u}_{jk})
\right)^2 \nonumber \\
&=& \vol(\G)  \|\vec{x}_i - \vec{x}_j \|^2, 
\end{eqnarray}
where the volume of the graph, $\vol(\G)$ is the sum of the degrees of
all the graph vertices. 
The $\CTD$ function is positive-definite and sub-additive, thus defining
a \emph{metric} between the graph vertices, referred to as
\emph{commute-time} (or \emph{resistance}) \emph{distance}
\cite{GrinsteadSnell98}.
The CTD is inversely related to the number and length of paths connecting two vertices. 
Unlike the shortest-path (geodesic) distance, CTD
captures the connectivity structure of the graph volume rather
than a single path between the two vertices. The great advantage of
the commute-time distance over the shortest geodesic path is that it
is robust to topological changes and therefore is well suited for
characterizing complex shapes.
Since the volume is a graph constant, we obtain:
\begin{equation}
\CTD^2(v_i,v_j) \propto  \|\vec{x}_i - \vec{x}_j \|^2.
\label{Sharma_eq:CTD-closedform1}
\end{equation}
Hence, the Euclidean distance between any two feature points $\vec{x}_i$ and $\vec{x}_j$ is the commute time distance
between the graph vertex $v_i$ and $v_j$.  

Using the first $K$ non-null eigenvalue-eigenvector pairs of the Laplacian $\lapmat$, the \textit{commute-time embedding}\index{commute-time embedding} of the graph's 
nodes corresponds to the column vectors of the $K\times n$ matrix $\mat{X}$:
\begin{equation}
\label{Sharma_eq:X-coordinates}
\mat{X}_{K\times n}=\mat{\Lambda}^{-1/2}_K(\mat{U}_{n\times K})\tp = [\vec{x}_1\hdots
\vec{x}_j\hdots\vec{x}_n].
\end{equation}
From (\ref{Sharma_eq:limits-of-entries-1}) and (\ref{Sharma_eq:X-coordinates}) one
can easily infer lower and upper bounds for the $i$-th coordinate of
$\vec{x}_j$:
\begin{equation}
-\lambda_i^{-1/2} < x_{ji} < \lambda_i^{-1/2}.
\label{Sharma_eq:xcoordinates-bounds}
\end{equation}
The last equation implies that the
graph embedding stretches along the eigenvectors with a factor that is
inversely proportional to the square root of the
eigenvalues. Theorem~\ref{Sharma_prop:graph-pca} below characterizes
the smallest non-null $K$ eigenvalue-eigenvector pairs of 
$\lapmat$  as the directions of maximum
variance (the principal components) of the commute-time embedding.

\begin{theorem}
\label{Sharma_prop:graph-pca}
The largest eigenvalue-eigenvector pairs of the pseudo-inverse of the combinatorial Laplacian matrix are the principal components of the commute-time embedding, i.e., the points
$\mat{X}$ are zero-centered and have a diagonal covariance matrix.
\end{theorem}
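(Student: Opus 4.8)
The plan is to compute the empirical covariance matrix of the embedded point set $\mat{X}=[\vec{x}_1\hdots\vec{x}_j\hdots\vec{x}_n]$ directly and to show that it is \emph{already} diagonal, so that no further rotation is needed to decorrelate the coordinates and the Laplacian eigenvectors are themselves the principal axes. First I would establish that the points are zero-centered. The $i$-th coordinate of the centroid is $\frac{1}{n}\sum_{j=1}^n x_{ji}=\frac{1}{n}\lambda_i^{-1/2}\sum_{j=1}^n u_{ji}$, and by Proposition~\ref{Sharma_prop:eigenvector-constraints} every non-constant eigenvector sums to zero, so the centroid vanishes. This is nothing but the orthogonality of each embedding coordinate to the constant eigenvector $\vec{u}_1=\mathbbm{1}$, which was discarded when passing to $\mat{X}$.

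Second, since the mean is zero, the covariance matrix reduces to $\mat{\Sigma}=\frac{1}{n}\sum_{j=1}^n\vec{x}_j\vec{x}_j\tp=\frac{1}{n}\mat{X}\mat{X}\tp$. Substituting the definition $\mat{X}=\mat{\Lambda}^{-1/2}_K(\mat{U}_{n\times K})\tp$ from (\ref{Sharma_eq:X-coordinates}) gives $\mat{X}\mat{X}\tp=\mat{\Lambda}^{-1/2}_K\mat{U}\tp\mat{U}\mat{\Lambda}^{-1/2}_K$, and the orthonormality $\mat{U}\tp\mat{U}=\mat{I}$ collapses this to $\mat{\Lambda}^{-1}_K=\diag(1/\lambda_2,\ldots,1/\lambda_{K+1})$. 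Hence $\mat{\Sigma}=\frac{1}{n}\mat{\Lambda}^{-1}_K$ is diagonal, which is precisely the claim that the embedding's coordinate directions are uncorrelated, i.e.\ that they are principal components.

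Finally, I would read off the interpretation that justifies the word \emph{largest} in the statement: the variance captured along the $k$-th axis equals $1/(n\lambda_k)$, a decreasing function of $\lambda_k$. The direction of greatest variance therefore corresponds to the \emph{smallest} non-null Laplacian eigenvalue, equivalently to the \emph{largest} eigenvalue $1/\lambda_k$ of the pseudo-inverse $\lapmat^{\dag}$ appearing in (\ref{Sharma_eq:GreensFunction}). This closes the identification of the leading eigenpairs of $\lapmat^{\dag}$ with the leading principal components of the commute-time embedding.

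I do not anticipate a genuine obstacle: once centering is supplied by the sum-to-zero property and diagonalization by orthonormality, the computation is a single line. The only points requiring care are bookkeeping ones — keeping the index convention of (\ref{Sharma_eq:X-coordinates}) straight (coordinates run over $k=2,\ldots,K+1$ while points run over $j=1,\ldots,n$), and remembering that the null eigenvalue and its constant eigenvector are deliberately omitted, which is exactly what makes both the centering and the invertibility of $\mat{\Lambda}_K$ legitimate.
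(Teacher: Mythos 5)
Your proof is correct and follows essentially the same route as the paper: centering via the sum-to-zero property of the non-constant eigenvectors, and diagonality of $\frac{1}{n}\mat{X}\mat{X}\tp=\frac{1}{n}\mat{\Lambda}^{-1}$ via column-orthonormality of $\mat{U}$. Your closing remark linking the largest eigenvalues of $\lapmat^{\dag}$ to the directions of greatest variance is a useful explicit justification of the word ``largest'' that the paper leaves implicit.
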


\begin{proof}
Indeed, from
(\ref{Sharma_eq:sum-of-entries-1}) we obtain a zero-mean while from
(\ref{Sharma_eq:X-coordinates}) we obtain a diagonal covariance matrix:
\begin{equation}
\overline{\vec{x}} = \frac{1}{n} \sum_{i=1}^{n} \vec{x}_i =
\frac{1}{n} \mat{\Lambda}^{-\frac{1}{2}}
\left( 
\begin{array}{c}
\sum_{i=1}^{n}\vec{u}_{i2} \\ \vdots \\ \sum_{i=1}^{n}\vec{u}_{ik+1}
\end{array} 
\right) = \left( 
\begin{array}{c} 0 \\ \vdots \\ 0 \end{array} 
\right)
\end{equation}
\begin{equation}
\mat{\Sigma}_X = 
\frac{1}{n}
\mat{X}\mat{X}\tp =  \frac{1}{n}
\mat{\Lambda}^{-\frac{1}{2}}\mat{U}\tp\mat{U} \mat{\Lambda}^{-\frac{1}{2}} =
\frac{1}{n} \mat{\Lambda}^{-1}
\end{equation}
 $\quad\blacksquare$.
\end{proof}
Figure~\ref{Sharma_fig:Eigenvector_projection} shows the projection of graph (in this case 3D shape
represented as meshes) vertices on eigenvectors. 
\begin{figure}[ht]
\begin{center}
\begin{tabular}{ccc}
\includegraphics[width=0.32\linewidth]{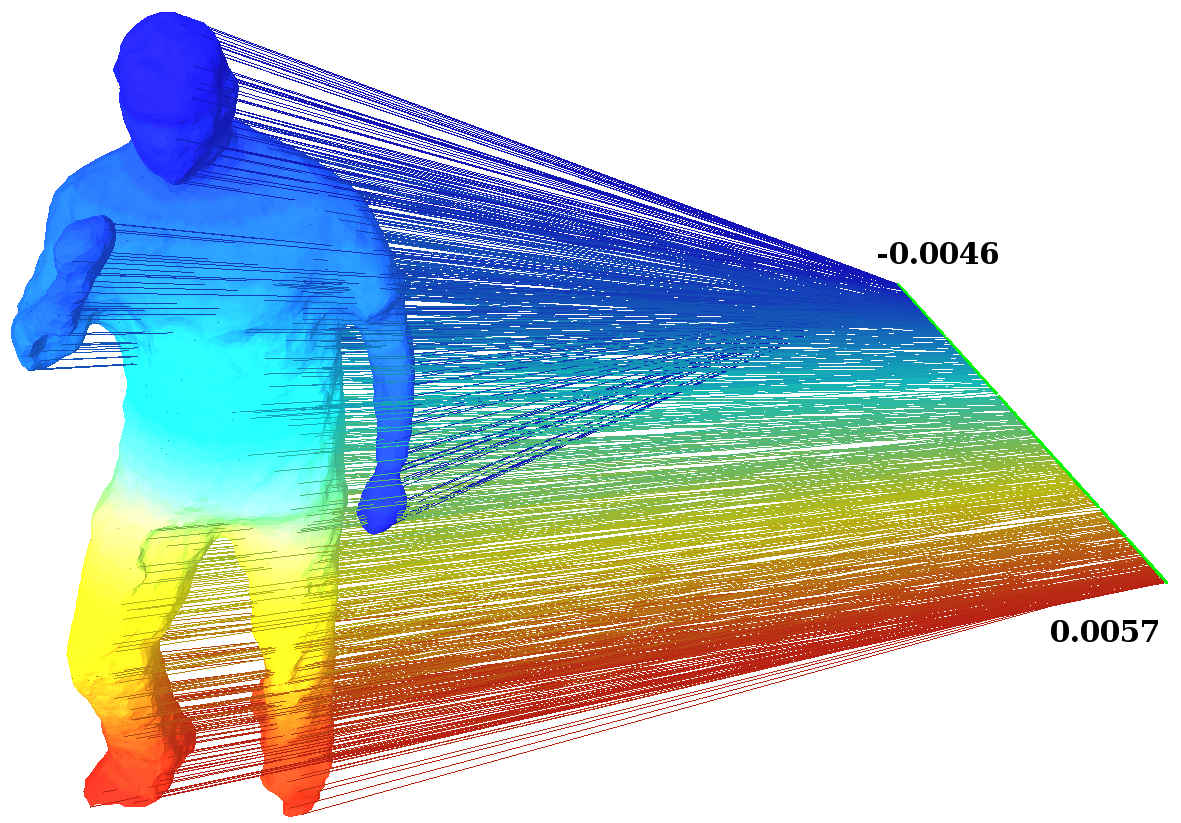} &
\includegraphics[width=0.32\linewidth]{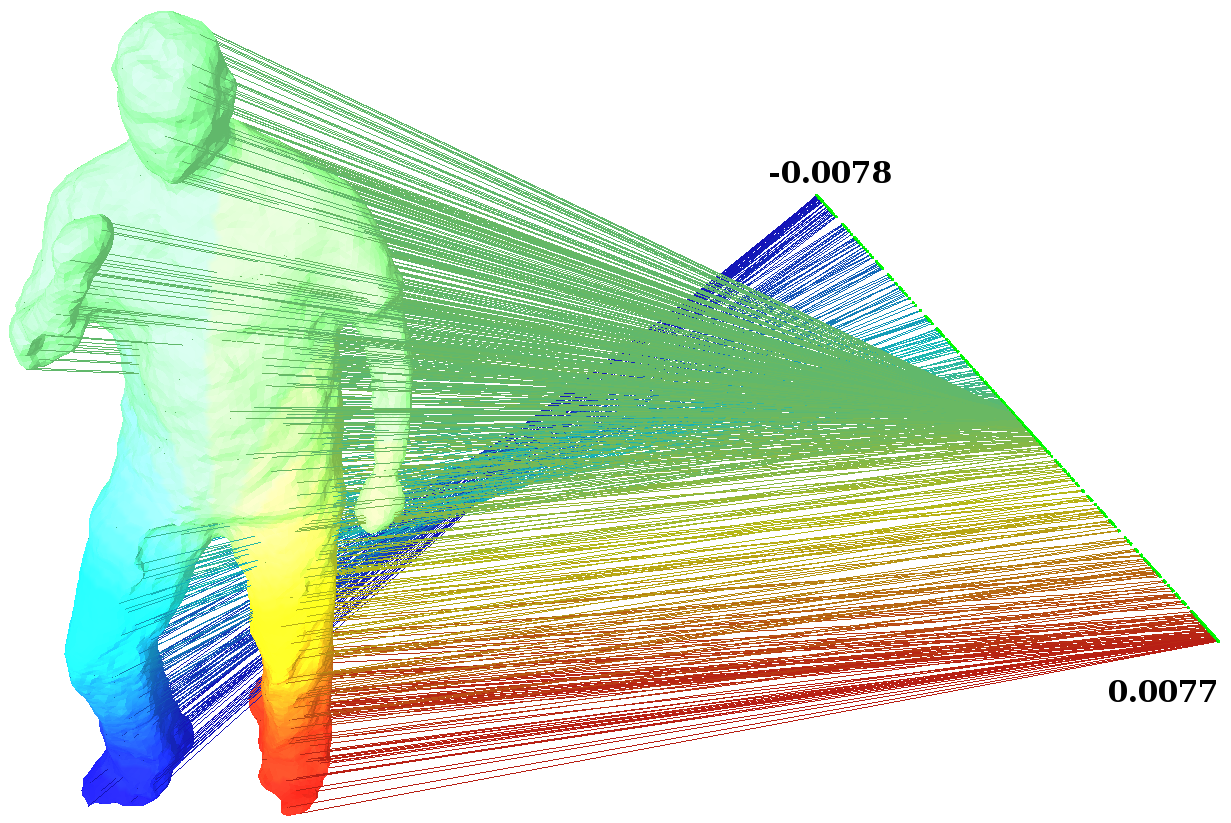} &
\includegraphics[width=0.32\linewidth]{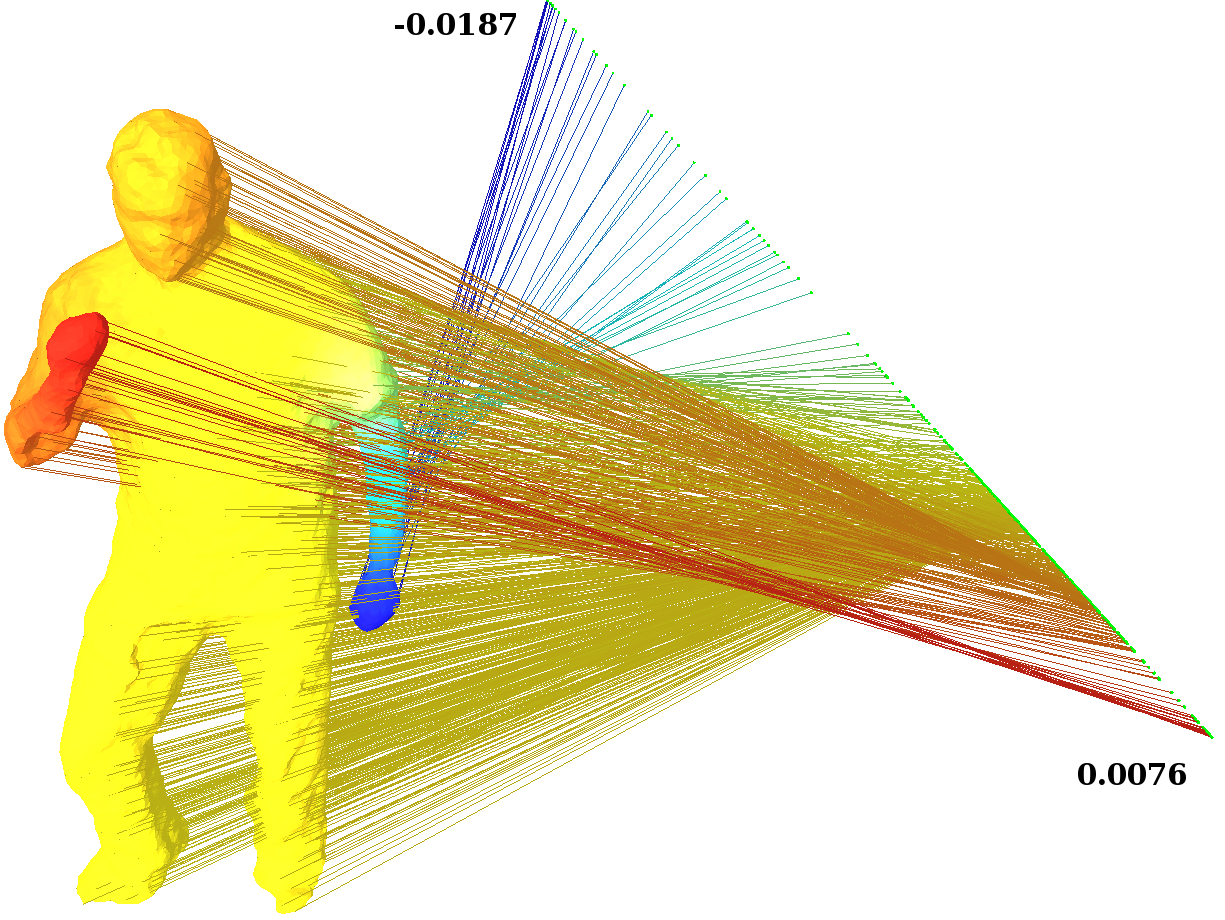} \\
(a) & (b) & (c)
\end{tabular}
\caption{This is an illustration of the concept of the PCA of a
  graph embedding. The graph's vertices are projected onto the second, third and
  fourth eigenvectors of the Laplacian matrix. These eigenvectors can be viewed as the principal directions of the shape.}
\label{Sharma_fig:Eigenvector_projection}
\end{center}
\end{figure}

%
\subsection{Choosing the Dimension of the Embedding}
\label{Sharma_subsection:ChoosingEmbDim}

A direct consequence of theorem~\ref{Sharma_prop:graph-pca} 
is that the embedded graph representation is centered and the eigenvectors of the 
combinatorial Laplacian are the directions of maximum variance. The \textit{principal}
eigenvectors correspond to the eigenvectors associated with the $K$
\textit{largest} eigenvalues of the  $\lapmat^{\dag}$, i.e.,
$\lambda_2^{-1}\geq \lambda_3^{-1}\geq\hdots \geq \lambda_K^{-1}$. The variance along
vector $\vec{u}_k$ is $\lambda_k^{-1}/n$. Therefore, the total
variance can be computed from the trace of the $\lapmat^{\dag}$ matrix :
\begin{equation}
\trace(\mat{\Sigma}_X) = \frac{1}{n} \trace(\lapmat^{\dag}).
\end{equation}
A standard way of choosing the principal components is to use the
\textit{scree diagram}:
\begin{equation}
\label{Sharma_eq:scree-diagram}
\theta (K) = \frac{\sum_{k=2}^{K+1} \lambda_k^{-1}}{\sum_{k=2}^{n} \lambda_k^{-1}}.
\end{equation}
The selection of the first $K$ principal eigenvectors therefore
depends on the spectral fall-off of the inverses of the eigenvalues. 
In spectral graph theory, the dimension $K$ is chosen on the basis of the existence of an 
eigengap, such that $\lambda_{K+2} - \lambda_{K+1} > t$ with $t>0$.
In practice it is extremely difficult to find such an eigengap, in
particular in the case of sparse graphs that correspond to a
discretized manifold. Instead, we propose to select the dimension of
the embedding in the following way. Notice that
(\ref{Sharma_eq:scree-diagram}) can be written as $\theta (K)=A/(A+B)$ with
$A=\sum_{k=2}^{K+1} \lambda_k^{-1}$ and $B=\sum_{k=K+2}^{n}
 \lambda_k^{-1}$. Moreover, from the fact that the $\lambda_k$'s are
arranged in increasing order, we obtain $B\leq
(n-K-1) \lambda_{K+1}^{-1}$. Hence:
\begin{equation}
\theta_{\min} \leq \theta(K) \leq 1,
\end{equation}
with
\begin{equation}
\label{Sharma_eq:choosing_K}
\theta_{\min} = \frac{\sum_{k=2}^{K+1}  \lambda_k^{-1}}{\sum_{k=2}^{K}
   \lambda_k^{-1} + (n-K) \lambda_{K+1}^{-1}}.
\end{equation}
This lower bound can be computed from the $K$ smallest non null eigenvalues
of the combinatorial Laplacian matrix. Hence, one can choose $K$ such
that the sum of the first $K$ eigenvalues of the  $\lapmat^{\dag}$ matrix is
a good approximation of the total variance, e.g.,
$\theta_{\min}=0.95$.

\subsection{Unit Hyper-sphere Normalization}
\label{Sharma_subsection:hypersphere-normalization}
One disadvantage of the standard embeddings is that,
when two shapes have large difference in sampling the embeddings
will differ by a significant scale factor. 
In order to avoid this we can
\textit{normalize} the embedding such that the 
vertex coordinates lie on a unit sphere of dimension $K$, which yields:
\begin{equation}
\label{Sharma_eq:normalized_embedding}
\hat{\vec{x}}_i = \frac{\vec{x}_i}{\| \vec{x}_i \|}.
\end{equation}
In more detail, the $k$-th coordinate of $\hat{\vec{x}}_i$ writes as:
\begin{equation}
\hat{\vec{x}}_{ik} = \frac{\lambda_k^{-\frac{1}{2}} \vec{u}_{ik}}
{\left(\sum_{l=2}^{K+1}\lambda_{l}^{-\frac{1}{2}}\vec{u}_{il}^2\right)^{1/2}}.
\end{equation}
%
\section{Spectral Shape Matching} 
\label{Sharma_section:AlignmentAndRegistration}  
%

In the previous sections we discussed solutions for the exact and
inexact graph isomorphism problems, we recalled the main spectral
properties of the combinatorial graph Laplacian, and we provided a novel analysis
of the commute-time embedding that allows to interpret the latter in
terms of the PCA of a graph, and to select the appropriate dimension
$K\ll n$ of the associated embedded metric space. In this section we
address the problem of 3D shape registration and we illustrate how the
material developed above can be exploited in order to build a robust
algorithm for spectral shape matching.  \indexsub{spectral}{shape matching}

Let's consider two shapes described by two graphs,  $\G_A$ and  $\G_B$
where $|\V_A| = n$ and $|\V_B| = m$.  Let $\lapmat_A$ and $\lapmat_B$
be their corresponding graph Laplacians. Without loss of generality,
one can choose the same dimension $K \ll \min (n,m)$ for the two embeddings.
This yields the following eigen
decompositions:
\begin{eqnarray}
\lapmat_A &=& \mat{U}_{n\times K} \mat{\Lambda}_K (\mat{U}_{n\times
  K})\tp \\
\lapmat_B &=& \mat{U}'_{m\times K} \mat{\Lambda}'_K (\mat{U}'_{m\times
  K})\tp.
\end{eqnarray}
For each one of these graphs, one can build two \textit{isomorphic} embedded representations, as
follows:
\begin{itemize}
\item An \textit{unnormalized Laplacian embedding} that uses the $K$
  rows of $\mat{U}_{n\times K}$ as the Euclidean coordinates of the
  vertices of $\G_A$ (as well as the $K$  rows of $\mat{U}'_{m\times K}$ as the Euclidean coordinates of the
  vertices of $\G_B$), and
\item A \textit{normalized commute-time embedding}\index{commute-time embedding} given by
  (\ref{Sharma_eq:normalized_embedding}), i.e., $\hat{\mat{X}}_A =
  [\hat{\vec{x}}_1\hdots \hat{\vec{x}}_j\hdots\hat{\vec{x}}_n]$ (as well as
$\hat{\mat{X}}_B = [\hat{\vec{x}}'_1\hdots \hat{\vec{x}}'_j\hdots
\hat{\vec{x}}'_m]$). We recall that each column $\hat{\vec{x}}_j$
(and respectively $\hat{\vec{x}}'_j$) is a $K$-dimensional vector
corresponding to a vertex $v_j$ of $\G_A$ (and respectively $v'_j$ of $\G_B$).
\end{itemize}

\subsection{Maximum Subgraph Matching and Point Registration}
\label{Sharma_subsection:MaximaSubgraphIsomorphism}

Let's apply the graph isomorphism framework of
Section~\ref{Sharma_section:SpectralGraphIsomorphism} to the two
graphs. They are embedded into two congruent spaces of dimension
$\mathbbm{R}^K$. If the smallest $K$ non-null eigenvalues associated with the
two embeddings are distinct and can be ordered, i.e.:
\begin{eqnarray}
\lambda_2 < \hdots < \lambda_k < \hdots < \lambda_{K+1}\\
\lambda'_2 < \hdots < \lambda'_k < \hdots < \lambda'_{K+1}
\end{eqnarray}
then, the Umeyama method
could be applied. If one uses the unnormalized Laplacian embeddings
just defined, (\ref{Sharma_eq:approx-isomorphism})
becomes: 
\begin{equation}\label{Sharma_eq:approx-isomorphism-Laplacian} 
\mat{Q}^{\star} = \mat{U}_{n\times K} \mat{S}_K (\mat{U}'_{m\times
  K})\tp
\end{equation}

Notice that here the sign matrix $\mat{S}$ defined in \ref{Sharma_eq:approx-isomorphism}  became a $K \times K$ matrix denoted by $\mat{S}_K$. We now assume that the eigenvalues $\{\lambda_2,\hdots,\lambda_{K+1}\}$ and $\{\lambda'_2,\hdots,\lambda'_{K+1}\}$ \textit{cannot be reliably ordered}. This can be modeled by multiplication with a $K \times K$ permutation matrix 
$\mat{P}_K$:
\begin{equation}\label{Sharma_eq:approx-isomorphism-Laplacian-with-permutation}
\mat{Q} =  \mat{U}_{n\times K}  \mat{S}_K  \mat{P}_K (\mat{U}'_{m\times
  K})\tp
\end{equation}
Pre-multiplication of $(\mat{U}'_{m\times
  K})\tp$ with $ \mat{P}_K$ permutes its rows such that $\vec{u}'_{k}
\rightarrow \vec{u}'_{\pi(k)}$. Each entry $q_{ij}$ of the $n\times m$ matrix
$\mat{Q}$ can therefore be written as:
\begin{equation}
\label{Sharma_eq:entry_Q_unnormalized}
q_{ij}= \sum_{k=2}^{K+1} s_k u_{ik} u'_{j\pi(k)}
\end{equation}
Since both $\mat{U}_{n\times K}$ and $\mat{U}'_{m\times K}$ are column-orthonormal matrices, the
dot-product defined by
(\ref{Sharma_eq:entry_Q_unnormalized}) is equivalent to the cosine of
the angle between two $K$-dimensional vectors. This means that each entry of
$\mat{Q}$ is such that $-1 \leq q_{ij} \leq +1$ and that two vertices
$v_i$ and $v'_j$ are matched if $q_{ij}$ is
close to 1. 

One can also use the normalized
commute-time coordinates and define an equivalent expression as above:
\begin{equation}
\label{Sharma_eq:Q_normalized}
\hat{\mat{Q}}=  \hat{\mat{X}}\tp \mat{S}_K \mat{P}_K \hat{\mat{X}}'
\end{equation}
with:
\begin{equation}
\label{Sharma_eq:entry_Q_normalized}
\hat{q}_{ij} = \sum_{k=2}^{K+1} s_k \hat{x}_{ik} \hat{x}'_{j\pi(k)}
\end{equation}
Because both sets of points $\hat{\mat{X}}$ and $\hat{\mat{X}}'$ lie on a $K$-dimensional unit hyper-sphere, we also have 
$-1 \leq \hat{q_{ij}} \leq +1$.

It should however be emphasized that the rank of the $n\times m$
matrices $\mat{Q}, \hat{\mat{Q}}$ is equal to $K$. Therefore, these
matrices cannot be viewed as \textit{relaxed permutation matrices}
between the two graphs. In fact they define many-to-many
correspondences between the vertices of the first graph and the
vertices of the second graph, this being due to the fact that the
graphs are embedded on a low-dimensional space. This is one of the
main differences between our method proposed in the next section and the Umeyama method, as well as
many other subsequent methods, that use all eigenvectors of the graph. 
As it will be explained below, our formulation leads to a shape
matching method that will alternate between aligning their eigenbases
and finding a vertex-to-vertex assignment. 

It is possible to extract a one-to-one assignment matrix
from $\mat{Q}$ (or from $\hat{\mat{Q}}$) using either
dynamic programming or an assignment method technique such as the
Hungarian algorithm. Notice that this assignment is conditioned by the
choice of a sign matrix $\mat{S}_K$ and of a permutation matrix
$\mat{P}_K$, i.e., $2^KK!$ possibilities, and that not all these
choices correspond to a valid sub-isomorphism between the two
graphs. Let's consider the case of the normalized commute-time
embedding; there is an equivalent formulation for the unnormalized
Laplacian embedding. The two graphs are described by two sets of
points, $\hat{\mat{X}}$ and $\hat{\mat{X}}'$, both lying onto the
$K$-dimensional unity hyper-sphere. The $K\times K$ matrix $\mat{S}_K
\mat{P}_K$ transforms one graph embedding onto the other graph
embedding. Hence, one can write $\hat{\vec{x}}_i = \mat{S}_K \mat{P}_K
\hat{\vec{x}}'_j$ if vertex $v_i$ matches $v_j$. More generally 
Let $ \mat{R}_K = \mat{S}_K \mat{P}_K$ and let's extend
the domain of  $\mat{R}_K$ to all possible orthogonal matrices\index{orthogonal matrices} of size $K \times K$, namely $\mat{R}_K\in \mathcal{O}_K$ or the \textit{orthogonal group} of dimension $K$. We can now write the following criterion whose minimization over $\mat{R}_K$ guarantees an optimal solution for registering the vertices of the first graph with the vertices of the second graph:
\begin{equation}
\label{Sharma_eq:orthogonal-transformation}
\min_{R_K} \sum_{i=1}^n\sum_{j=1}^m \hat{q}_{ij} \| \hat{\vec{x}}_i -  \mat{R}_K \hat{\vec{x}}_j' \|^2
\end{equation}

One way to solve minimization problems such as (\ref{Sharma_eq:orthogonal-transformation}) is to use a point registration algorithm that alternates
between (i)~estimating the $K \times K$ orthogonal transformation $\mat{R}_K$,
which aligns the $K$-dimensional coordinates associated with 
the two embeddings, and (ii)~updating the assignment variables $\hat{q}_{ij}$. This can be done using either ICP-like methods (the $\hat{q}_{ij}$'s are binary variables), or EM-like methods (the $\hat{q}_{ij}$'s are posterior probabilities of assignment variables). 
As we just outlined above, matrix $\mat{R}_K$ belongs to the orthogonal group $\mathcal{O}_K$.
Therefore this framework differs from standard implementations of ICP and EM algorithms that usually estimate a 2-D or 3-D \textit{rotation} matrix which belong to the \textit{special orthogonal group}.

It is well established that ICP algorithms are easily trapped in local minima. The EM algorithm recently proposed in \cite{HFYDZ11} is able to converge to a good solution starting with a rough initial guess and is robust to the presence of outliers. Nevertheless, the algorithm proposed in \cite{HFYDZ11} performs well under \textit{rigid transformations} (rotation and translation), whereas in our case we have to estimate a more general orthogonal transformation that incorporates both rotations and reflections. Therefore, before describing in detail an EM algorithm well suited for solving the problem at hand, we discuss the issue of estimating an initialization for the transformation aligning the $K$ eigenvectors of the first embedding with those of the second embedding and we propose a practical method for initializing this transformation (namely, matrices $\mat{S}_K$ and  $\mat{P}_K$ in (\ref{Sharma_eq:Q_normalized})) based on comparing the histograms of these eigenvectors, or \textit{eigensignatures}.

\subsection{Aligning Two Embeddings Based on Eigensignatures}
\label{Sharma_subsection:HistogramMatching}

Both the unnormalized Laplacian embedding and the normalized commute-time embedding of a graph are represented in a metric space spanned by the eigenvectors of the Laplacian matrix, namely the n-dimensional vectors $\{\vec{u}_2, \hdots, \vec{u}_k, \hdots, \vec{u}_{K+1}\}$, where $n$ is the number of graph vertices. They correspond to \textit{eigenfunctions} and each such eigenfunction maps the graph's vertices onto the real line. More precisely, the $k$-th eigenfunction maps a vertex $v_i$ onto $u_{ik}$. 
Propositions~\ref{Sharma_prop:eigenvector-constraints} and \ref{Sharma_prop:spread-of-u} revealed interesting statistics of the sets $\{u_{1k}, \hdots, u_{ik}, \hdots, u_{nk}\}_{k=2}^{K+1}$. Moreover, theorem~\ref{Sharma_prop:graph-pca} provided an interpretation of the eigenvectors in terms of principal directions of the embedded shape.  One can therefore conclude that the  probability distribution of the components of an eigenvector have interesting properties that make them suitable for comparing two shapes, namely $-1 < u_{ik} < +1$, $\overline{u}_k = 1/n\sum_{i=1}^n u_{ik} = 0$, and $\sigma_k = 1/n \sum_{i=1}^n u_{ik}^2 = 1/n$. This means that one can build a histogram for each eigenvector and that all these histograms share the same bin width $w$ and the same number of bins $b$ \cite{Scott1979}:
\begin{eqnarray}
w_k &=& \frac{3.5 \sigma_k}{n^{1/3}} = \frac{3.5}{n^{4/3}}\\
b_k &=& \frac{\sup_i u_{ik} - \inf_i u_{ik}}{w_k} \approx \frac{n^{4/3}}{2}.
\end{eqnarray}

We claim that these histograms are eigenvector signatures which are invariant under graph isomorphism. Indeed, let's consider the Laplacian $\lapmat$ of a shape and we apply the isomorphic transformation $\mat{P}\lapmat\mat{P}\tp$ to this shape, where $\mat{P}$ is a permutation matrix. If $\vec{u}$ is an eigenvector of $\lapmat$, it follows that $\mat{P}\vec{u}$ is an eigenvector of $\mat{P}\lapmat\mat{P}\tp$ and therefore, while the order of the components of $\vec{u}$ are affected by this transformation, their frequency  and hence their probability distribution remain the same. Hence, one may conclude that such a histogram may well be viewed as an \textit{eigensignature}. 

We denote with $H\{\vec{u}\}$ the histogram formed with the components
of $\vec{u}$ and let $C(H\{\vec{u}\},H\{\vec{u}'\})$ be a similarity
measure between two histograms. From the eigenvector properties just
outlined, it is straightforward to notice that $H\{\vec{u}\}\neq
H\{\vec{-u}\}$: These two histograms are mirror symmetric. Hence, the histogram is not invariant to the sign of an eigenvector. 
Therefore one can use the eigenvectors' histograms to estimate both the permutation matrix $\mat{P}_K$ and the sign matrix $\mat{S}_K$ in (\ref{Sharma_eq:Q_normalized}).
The problem of finding  one-to-one assignments $\{\vec{u}_k \leftrightarrow s_k \vec{u}'_{\pi(k)}\}_{k=2}^{K+1}$ between the two sets of eigenvectors associated with the two shapes is therefore equivalent to the problem of finding one-to-one assignments between their histograms. 

Let $\mat{A}_K$ be an assignment matrix between the histograms of the first shape and the histograms of the second shape. Each entry of this matrix is defined by:
\begin{equation}
\label{Sharma_eq:histo_matching_score}
a_{kl} = \sup [ C(H\{\vec{u}_k\},H\{\vec{u}_l'\}) ; C(H\{\vec{u}_k\},H\{\vec{-u}_l'\}) ]
\end{equation}
Similarly, we define a matrix $\mat{B}_K$ that accounts for the \textit{sign assignments}:
\begin{equation}
b_{kl} = \left\{
\begin{array}{ccc}
+1 & \mbox{if} & C(H\{\vec{u}_k\},H\{\vec{u}_l'\}) \geq C(H\{\vec{u}_k\},H\{\vec{-u}_l'\}) \\
-1 & \mbox{if} & C(H\{\vec{u}_k\},H\{\vec{u}_l'\}) < C(H\{\vec{u}_k\},H\{\vec{-u}_l'\})
\end{array}
\right.
\end{equation}

Extracting a permutation matrix $\mat{P}_K$ from $\mat{A}_K$ is an
instance of the bipartite maximum matching problem and the Hungarian
algorithm\index{Hungarian algorithm} is known to provide an optimal solution to this assignment
problem~\cite{burkard2009}. Moreover, one can use the estimated
$\mat{P}_K$ to extract a sign matrix  $\mat{S}_K$ from
$\mat{B}_K$. Algorithm~\ref{Sharma_algo:Align} estimates an alignment between two
embeddings.
\begin{algorithm}[h!]
\caption{\textit{Alignment of Two Laplacian Embeddings}}
\label{Sharma_algo:Align}
\begin{algorithmic}[1]
\INPUT : Histograms associated with eigenvectors
$\{\vec{u}_k\}_{k=2}^{K+1}$ and $\{\vec{u}'_k\}_{k=2}^{K+1}$.
\OUTPUT: A permutation matrix $\mat{P}_K$ and a sign matrix $\mat{S}_K$.
\STATE Compute the assignment matrices $\mat{A}_K$ and $\mat{B}_K$.
\STATE Compute $\mat{P}_K$ from $\mat{A}_K$ using the Hungarian algorithm.
\STATE Compute the sign matrix $\mat{S}_K$ using  $\mat{P}_K$ and $\mat{B}_K$.
\end{algorithmic}
\end{algorithm}
\begin{figure}[h!]
\begin{center}
\includegraphics[width=0.59\linewidth]{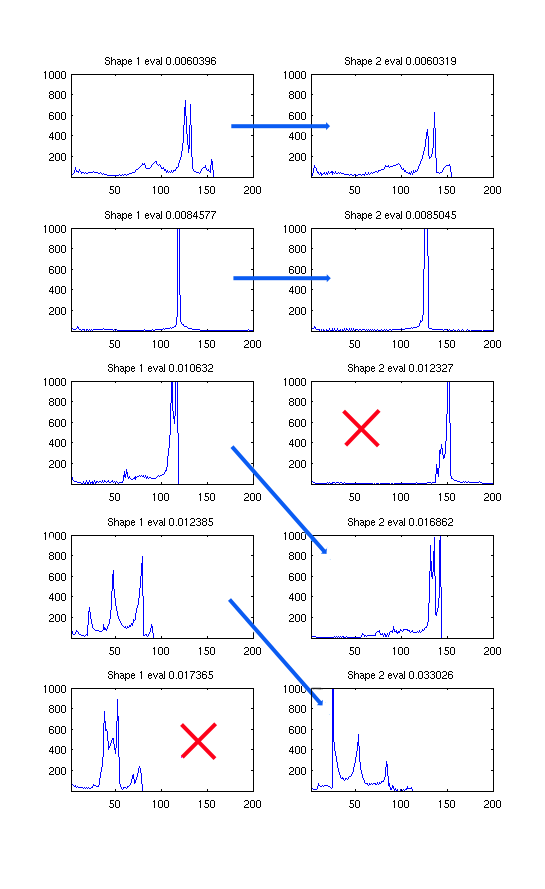} 
\caption{An illustration of applicability of eigenvector histogram as eigensignature to detect sign flip and 
eigenvector ordering change. The blue line shows matched eigenvector pairs and the red-cross depicts discarded eigenvectors.}
\label{Sharma_fig:histogram_matching}
\end{center}
\end{figure}

Figure~\ref{Sharma_fig:histogram_matching} illustrates the utility of the histogram of eigenvectors as
eigensignatures for solving the problem of sign flip and change in eigenvector ordering by computing 
histogram matching\index{histogram matching}. 
It is interesting to observe that a threshold on the histogram matching score (\ref{Sharma_eq:histo_matching_score}) 
allows us to discard the eigenvectors with low similarity cost.  Hence, starting with large $K$ 
obtained using (\ref{Sharma_eq:choosing_K}), we can limit the number of eigenvectors to just a few, 
which will be suitable for EM based point registration algorithm proposed in the next section. 
\subsection{An EM Algorithm for Shape Matching}
\label{Sharma_subsection:PointRegistration}
%

As explained in section~\ref{Sharma_subsection:MaximaSubgraphIsomorphism}, the maximum subgraph matching problem 
reduces to a point registration\index{point registration} problem in $K$ dimensional metric space spanned by the eigenvectors of graph Laplacian 
where two shapes are represented as point clouds. 
The initial alignment of Laplacian embeddings can be obtained by matching the histogram of eigenvectors as
described in the previous section.
In this section we propose an EM algorithm\index{EM algorithm} for 3D shape matching that computes a probabilistic 
vertex-to-vertex assignment between two shapes. The proposed method alternates between the step to estimate 
an orthogonal transformation matrix associated with the alignment of the two shape embeddings and the step 
to compute a point-to-point probabilistic assignment variable.
  
The method is based on a parametric probabilistic model, namely maximum likelihood with missing data. Let
us consider the Laplacian embedding of two shapes, i.e., (\ref{Sharma_eq:X-coordinates})
: $\hat{\mat{X}}=\{\hat{\vec{x}}_i\}_{i=1}^{n},
\hat{\mat{X}}'=\{\hat{\vec{x}}'_j\}_{j=1}^{m}$, with
$\hat{\mat{X}},\hat{\mat{X}}'\subset\mathbbm{R}^K$. Without loss of generality, we
assume that the points in the first set, $\hat{\mat{X}}$ are cluster centers
of a Gaussian mixture model\index{Gaussian mixture model} (GMM) with $n$ clusters and an additional
uniform component that accounts for outliers and unmatched data. The
matching $\hat{\mat{X}} \leftrightarrow \hat{\mat{X}}'$ will consist in fitting the
Gaussian mixture to the set $\hat{\mat{X}}'$.

Let this Gaussian mixture undergo a $K\times K$
transformation $\mat{R}$ (for simplicity, we omit the index $K$) with $\mat{R}\tp\mat{R}=\mat{I}_{K}, \det(\mat{R})=\pm
1$, more precisely $\mat{R}\in \mathcal{O}_K$, the group of orthogonal
matrices acting on $\mathbbm{R}^K$. Hence, each cluster in the mixture
is parametrized by a prior $p_i$, a cluster mean
$\vec{\mu}_i=\mat{R}\hat{\vec{x}}_i$, and a covariance matrix
$\mat{\Sigma}_i$. It will be assumed that all the clusters in the
mixture have the same priors, $\{p_i=\pi_\text{in}\}_{i=1}^{n}$, and the same
isotropic covariance matrix,
$\{\mat{\Sigma}_i=\sigma\mat{I}_{K}\}_{i=1}^{n}$. This
parametrization leads to the following \textit{observed-data
  log-likelihood} (with $\pi_{\text{out}}=1-n\pi_{\text{in}}$ and $\mathcal{U}$ is
the uniform distribution):
\begin{equation}
\label{Sharma_eq:logL}
\log P(\hat{\mat{X}}') = \sum_{j=1}^{m} \log \left( \sum_{i=1}^n \pi_\text{in}
  \mathcal{N}(\hat{\vec{x}}'_j|\vec{\mu}_i,\sigma) + \pi_\text{out}
  \mathcal{U}\right)
\end{equation}
It is well known that the direct maximization of (\ref{Sharma_eq:logL}) is
not tractable and it is more practical to maximize the
\textit{expected complete-data log-likelihood} using the EM
algorithm, where ``complete-data'' refers to both the observed data
(the points $\hat{\mat{X}}'$) and the missing data (the data-to-cluster
assignments). In our case, the above expectation writes (see
\cite{HFYDZ11} for details):
\begin{equation}
\label{Sharma_eq:E}
\mathcal{E}(\mat{R},\sigma)=
-\frac{1}{2}\sum_{j=1}^{m}\sum_{i=1}^{n}\alpha_{ji}(\|\hat{\vec{x}}'_j-\mat{R}\hat{\vec{x}}_i\|^{2}+
k \log \sigma ),
\end{equation}
where $\alpha_{ji}$ denotes the posterior probability of an assignment:
$\hat{\vec{x}}'_j\leftrightarrow\hat{\vec{x}}_i$:
\begin{equation}
\label{Sharma_eq:Post}
\alpha_{ji} =
\frac{\exp(-\|\hat{\vec{x}}'_j-\mat{R}\hat{\vec{x}}_i\|^{2}/2\sigma)}{\sum_{q=1}^{n}\exp(-\|\hat{\vec{x}}'_j-\mat{R}\hat{\vec{x}}_q\|^{2}/2\sigma)
  + \emptyset\sigma^{k/2}},
\end{equation}
where $\emptyset$ is a constant term associated with the uniform
distribution $\mathcal{U}$. Notice that one easily obtains the
posterior probability of a data point to remain unmatched,
$\alpha_{jn+1}=1-\sum_{i=1}^{n}\alpha_{ij}$. This leads to the shape matching\index{shape matching}
procedure outlined in Algorithm~\ref{Sharma_algo:EM}.
\begin{algorithm}[h!]
\caption{\textit{EM for shape matching}}
\label{Sharma_algo:EM}
\begin{algorithmic}[1]
\INPUT : Two embedded shapes $\hat{\mat{X}}$ and $\hat{\mat{X}}'$;
\OUTPUT: Dense correspondences  $\hat{\mat{X}}\leftrightarrow\hat{\mat{X}}'$ between
  the two shapes;
\STATE \textit{Initialization:} Set $\mat{R}^{(0)}=
\mat{S}_K\mat{P}_K$ choose a large value for the variance $\sigma^{(0)}$;
\STATE \textit{E-step:} Compute the current posteriors $\alpha_{ij}^{(q)}$ from the current parameters
using (\ref{Sharma_eq:Post});
\STATE \textit{M-step:} Compute the  new transformation $\mat{R}^{(q+1)}$ and the new variance $\sigma^{(q+1)}$ using the current posteriors:
$$\mat{R}^{(q+1)}=\arg\min_{\mat{R}} \sum_{i,j}\alpha_{ij}^{(q)}
\|\vv{x}'_j -\mat{R}\vv{x}_i\|^2$$ 
$$\sigma^{(q+1)} =
\sum_{i,j}\alpha_{ij}^{(q)}\|\hat{\vec{x}}'_j -\mat{R}^{(q+1)}\hat{\vec{x}}_i\|^2 / k\sum_{i,j}\alpha_{ij}^{(q)}$$
\STATE \textit{MAP:} Accept the assignment $\hat{\vec{x}}'_j \leftrightarrow \hat{\vec{x}}_i$
if $\max_{i} \alpha_{ij}^{(q)}>0.5$.
\end{algorithmic}
\end{algorithm}



\section{Experiments and Results}
\label{Sharma_section:Results}

%

We have performed several 3D shape registration experiments to evaluate the proposed method. 
In the first experiment, 3D shape registration is performed on 138 high-resolution 
(10K-50K vertices) triangular meshes from the publicly available TOSCA dataset~\cite{BronsteinBronstein2010a}. 
The dataset includes 3 shape classes (human, dog, horse) with simulated transformations. 
Transformations are split into 9 classes (isometry, topology, small and big holes, global 
and local scaling, noise, shot noise, sampling). Each transformation class appears in five 
different strength levels. An estimate of average geodesic distance to ground truth correspondence 
was computed for performance evaluation (see ~\cite{BronsteinBronstein2010a} for details).

We evaluate our method in two settings. In the first setting SM1 we use the commute-time embedding (\ref{Sharma_eq:X-coordinates}) while in the second setting SM2 we use the unit hyper-sphere normalized embedding (\ref{Sharma_eq:normalized_embedding}).
 
Table~\ref{Sharma_table:results1} shows the error estimates for dense shape matching using 
proposed spectral matching method. 
\begin{table*}
\begin{center}
\hspace{1mm}
\begin{tabular}{l c c|c c|c c|c c|c c}
& \multicolumn{10}{c}{{\bf Strength}} \\
\hline
{\bf Transform} & \multicolumn{2}{c}{$1$} & \multicolumn{2}{c}{$\leq 2$} & \multicolumn{2}{c}{$\leq 3$} & \multicolumn{2}{c}{$\leq 4$} & \multicolumn{2}{c}{$\leq 5$} \\
\hline
 & SM1 & SM2 & SM1 & SM2 & SM1 & SM2 & SM1 & SM2 & SM1 & SM2\\
\hline
\textit{Isometry} & 0.00 & 0.00 & 0.00 & 0.00 & 0.00 & 0.00 & 0.00 & 0.00 & 0.00 & 0.00\\
\textit{Topology} & 6.89 & 5.96 & 7.92 & 6.76 & 7.92 & 7.14 & 8.04 & 7.55 & 8.41 & 8.13 \\
\textit{Holes}    & 7.32 & 5.17 & 8.39 & 5.55 & 9.34 & 6.05 & 9.47 & 6.44 & 12.47 & 10.32\\
\textit{Micro holes}    & 0.37 & 0.68 & 0.39 & 0.70 & 0.44 & 0.79 & 0.45 & 0.79 & 0.49 & 0.83\\
\textit{Scale} & 0.00 & 0.00 & 0.00 & 0.00 & 0.00 & 0.00 & 0.00 & 0.00 & 0.00 & 0.00 \\
\textit{Local scale} & 0.00 & 0.00 & 0.00 & 0.00 & 0.00 & 0.00 & 0.00 & 0.00 & 0.00 & 0.00\\
\textit{Sampling} & 11.43 & 10.51 & 13.32 & 12.08 & 15.70 & 13.65 & 18.76 & 15.58 & 22.63 & 19.17\\
\textit{Noise} & 0.00 & 0.00 & 0.00 & 0.00 & 0.00 & 0.00 & 0.00 & 0.00 & 0.00 & 0.00 \\
\textit{Shot noise} & 0.00 & 0.00 & 0.00 & 0.00 & 0.00 & 0.00 & 0.00 & 0.00 & 0.00 & 0.00 \\
\hline
{\bf Average} & 2.88 & 2.48 & 3.34 & 2.79 & 3.71 & 3.07 & 4.08 &3.37 & 4.89 & 4.27 \\
\hline
\end{tabular}
\end{center}
\caption{3D shape registration error estimates (average geodesic distance to ground truth correspondences) using proposed spectral matching method with commute-time embedding (SM1) and unit hyper-sphere normalized embedding (SM2).}
\label{Sharma_table:results1}
\end{table*}
%
%
%
\begin{table*}
\begin{center}
\hspace{1mm}
\begin{tabular}{l c c c c c}
& \multicolumn{5}{c}{{\bf Strength}} \\
\hline
{\bf Method} & $1$ &$\leq 2$ & $\leq 3$ & $\leq 4$ & $\leq 5$ \\
\hline
\textit{LB1} & 10.61 & 15.48 & 19.01 & 23.22 & 23.88 \\
\hline
\textit{LB2} & 15.51 & 18.21 & 22.99 & 25.26 & 28.69 \\
\hline
\textit{GMDS} & 39.92 & 36.77 & 35.24 & 37.40 & 39.10 \\
\hline
\textit{SM1} & 2.88 & 3.34 & 3.71 & 4.08 & 4.89 \\
\hline
\textit{SM2} & 2.48 & 2.79 & 3.07 & 3.37 & 4.27 \\
\hline
\end{tabular}
\end{center}
\caption{Average shape registration error estimates over all transforms 
(average geodesic distance to ground truth correspondences) computed using proposed methods (SM1 and SM2), 
GMDS~\cite{DubrovinaKimmel2010} and LB1, LB2 ~\cite{BronsteinBronstein2006}.} 
\label{Sharma_table:results2}
\end{table*}
\begin{table*}
\begin{center}
\hspace{1mm}
\begin{tabular}{l c c c}
& \multicolumn{3}{c}{{\bf Strength}} \\
\hline
{\bf Transform} & $1$ & $\leq 3$ & $\leq 5$ \\
\hline
\textit{Isometry} & SM1,SM2 & SM1,SM2 & SM1,SM2 \\
\textit{Topology} & SM2 & SM2 & SM2 \\
\textit{Holes}    & SM2 & SM2 & SM2 \\
\textit{Micro holes}  & SM1 & SM1 & SM1 \\
\textit{Scale} &  SM1,SM2 & SM1,SM2 & SM1,SM2 \\
\textit{Local scale} &  SM1,SM2 & SM1,SM2 & SM1,SM2 \\
\textit{Sampling} & LB1 & SM2 & LB2 \\
\textit{Noise} &  SM1,SM2 & SM1,SM2 & SM1,SM2 \\
\textit{Shot noise} &  SM1,SM2 & SM1,SM2 & SM1,SM2 \\
\hline
{\bf Average} &  SM1,SM2 & SM1,SM2 & SM1,SM2 \\
\hline
\end{tabular}
\end{center}
\caption{3D shape registration performance comparison:  The proposed methods (SM1 and SM2) performed best by providing minimum average shape registration error over all the transformation classes with different strength as compare to GMDS~\cite{DubrovinaKimmel2010} and LB1, LB2 ~\cite{BronsteinBronstein2006} methods. }
\label{Sharma_table:results3}
\end{table*}
In the case of some transforms, the proposed method yields zero error because the two meshes had identical 
triangulations. Figure~\ref{Sharma_fig:shrec_results} shows some matching results. The colors emphasize the
correct matching of body parts while we show only $5 \%$ of matches for better visualization. 
In Figure~\ref{Sharma_fig:shrec_results}(e) the two shapes have large difference in the sampling rate.
In this case the matching near the shoulders is not fully correct since we used the commute-time embedding.
 
\begin{figure}[h!]
\begin{center}
\begin{tabular}{ccc}
\includegraphics[width=0.33\linewidth]{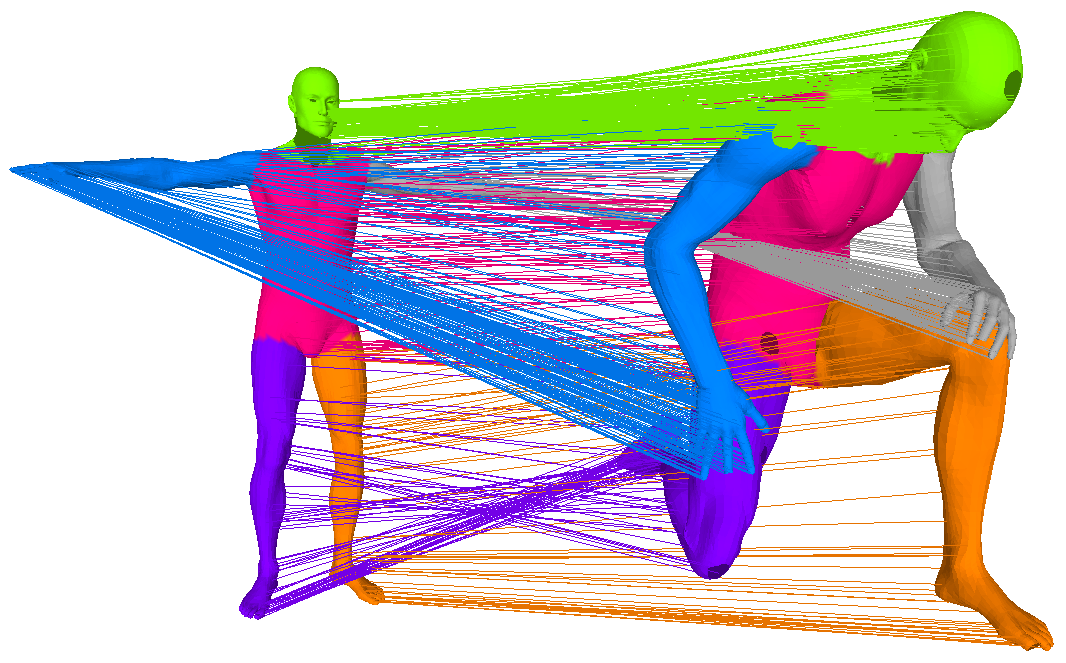} &
\includegraphics[width=0.33\linewidth]{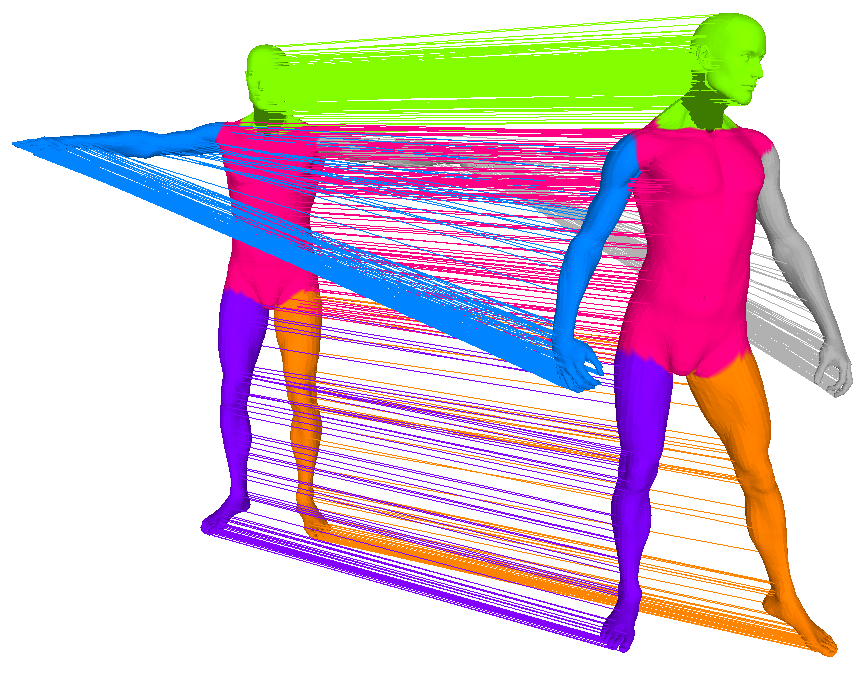} &
\includegraphics[width=0.33\linewidth]{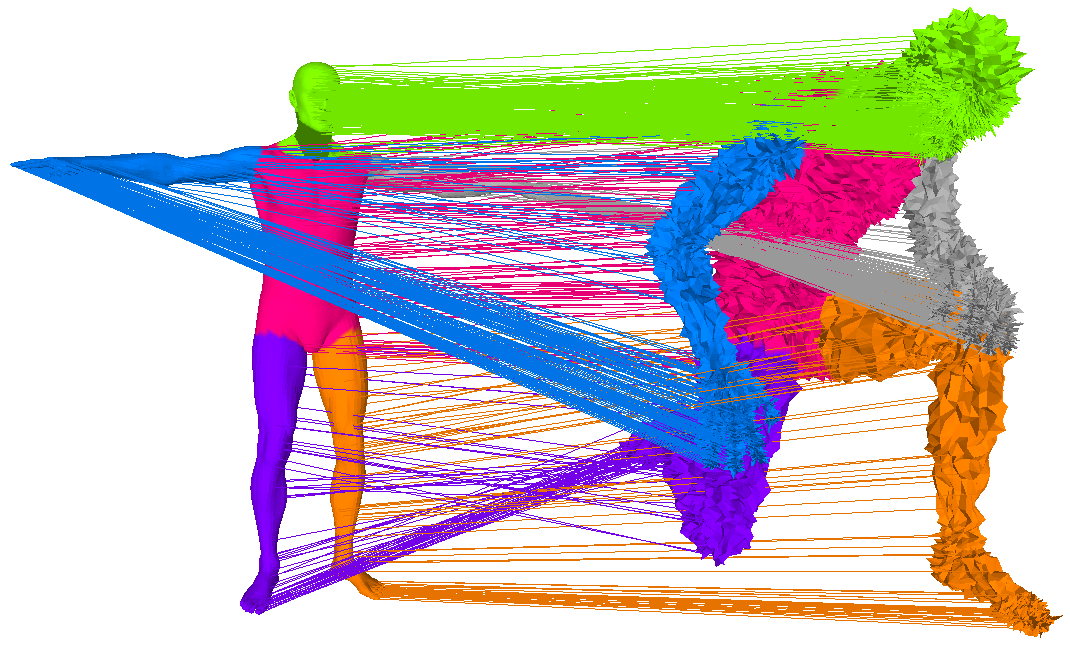} \\
(a) Holes & (b) Isometry & (c) Noise
\end{tabular}
\begin{tabular}{cc}
\includegraphics[width=0.33\linewidth]{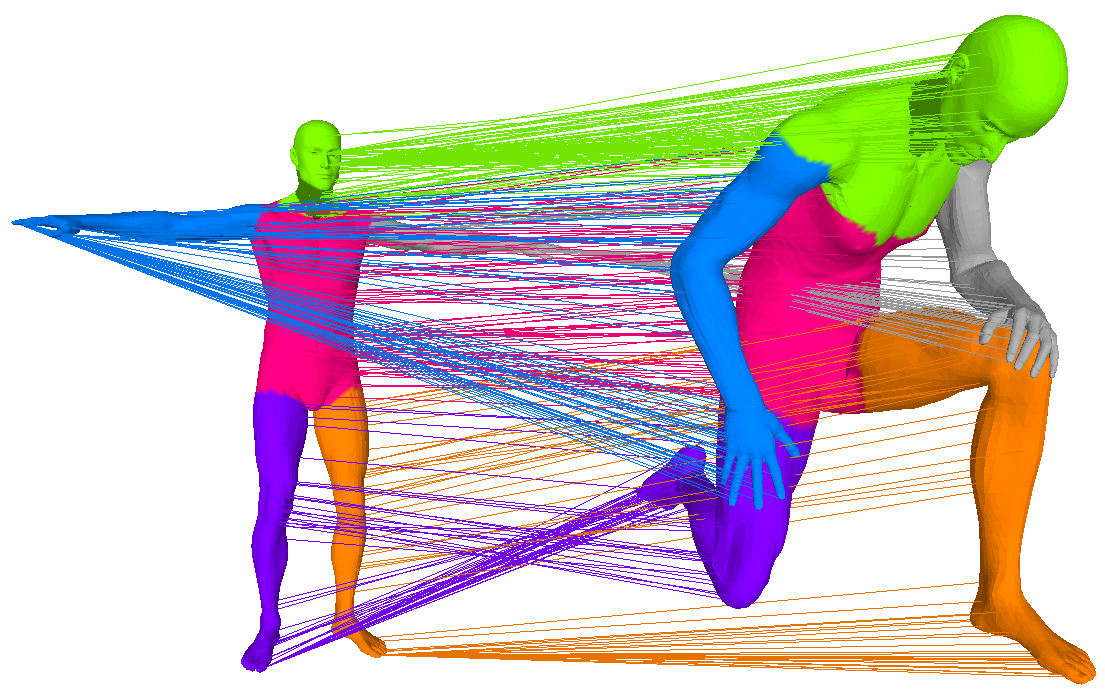} &
\includegraphics[width=0.33\linewidth]{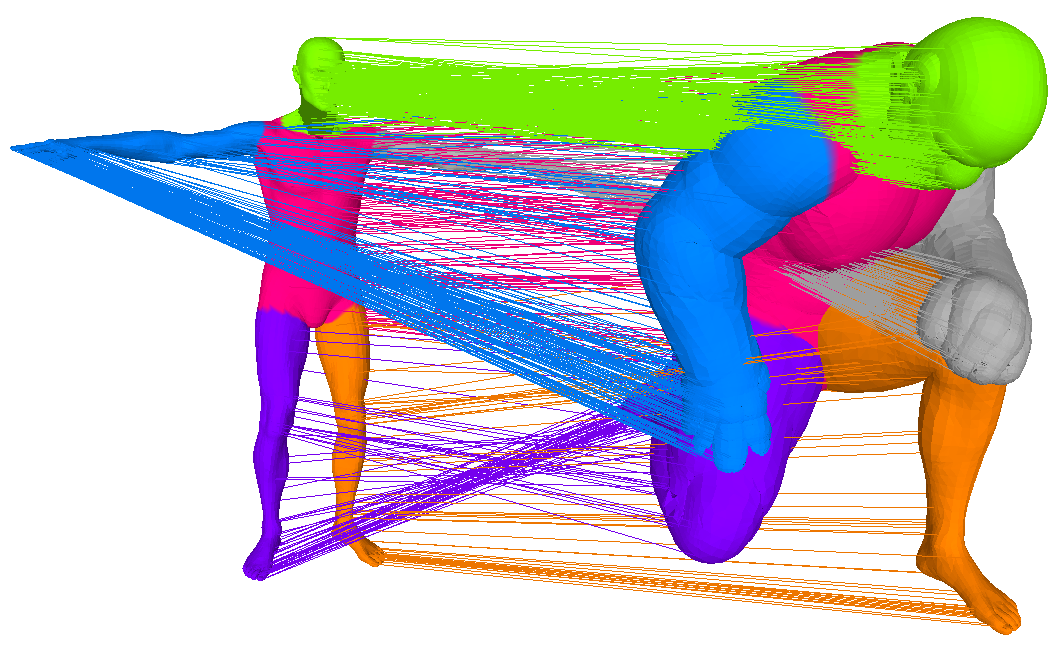} \\
(e) Sampling & (f) Local scale
\end{tabular}
\caption{3D shape registration in the presence of different transforms.}
\label{Sharma_fig:shrec_results}
\end{center}
\end{figure}
Table~\ref{Sharma_table:results2} summarizes the comparison of proposed spectral matching method (SM1 and SM2) with 
generalized multidimensional scaling (GMDS) based matching algorithm introduced in~\cite{DubrovinaKimmel2010} 
and the Laplace-Beltrami matching algorithm proposed in~\cite{BronsteinBronstein2006} with two settings 
LB1 (uses graph Laplacian) and LB2 (uses cotangent weights).
GMDS computes correspondence between two shapes by trying to embed one shape into another with minimum distortion.  
LB1 and LB2 algorithms combines the surface descriptors based on the eigendecomposition of the Laplace-Beltrami 
operator and the geodesic distances measured on the shapes when calculating the correspondence quality. 
The above results in a quadratic optimization problem formulation for correspondence detection, and its minimizer 
is the best possible correspondence.
The proposed method clearly outperform the other two methods with minimum average error estimate computed over all the 
transformations in the dataset.  

In table~\ref{Sharma_table:results3}, we show a detailed comparison of proposed method with other methods. 
For a detailed quantitative comparison refer to ~\cite{BronsteinBronstein2010a}. 
The proposed method inherently uses diffusion geometry as opposed to geodesic metric used by other 
two methods and hence outperform them. 

In the second experiment we perform shape registration on two different shapes with similar topology. 
In Figure~\ref{Sharma_fig:inter_shape_results}, results of shape registration on different shapes is presented. 
Figure~\ref{Sharma_fig:inter_shape_results}(a),(c) shows the initialization step of EM algorithm while 
Figure~\ref{Sharma_fig:inter_shape_results}(b),(d) shows the dense matching obtained after EM convergence. 
\begin{figure}[h!]
\begin{center}
\begin{tabular}{cc}
\includegraphics[width=0.49\linewidth]{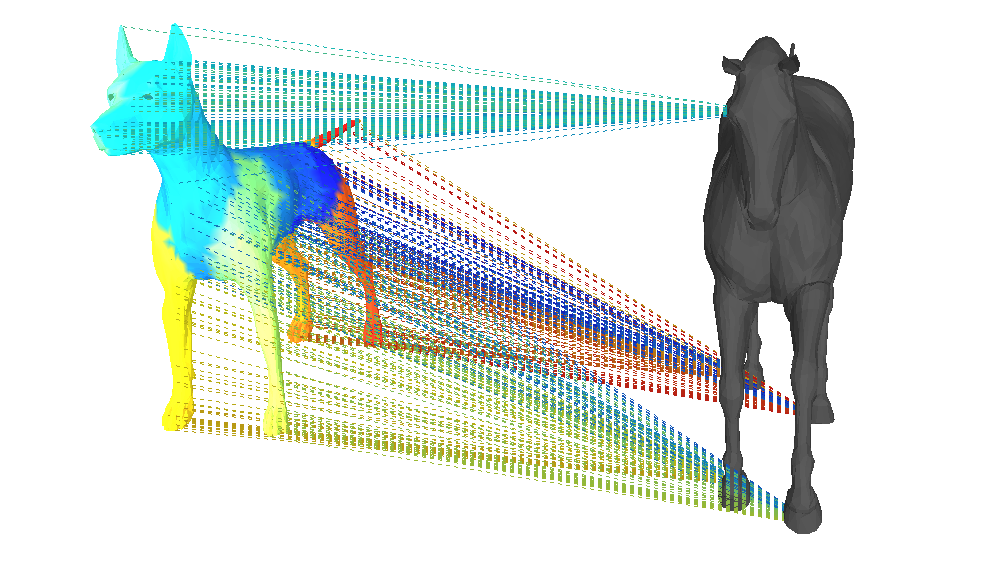} &
\includegraphics[width=0.49\linewidth]{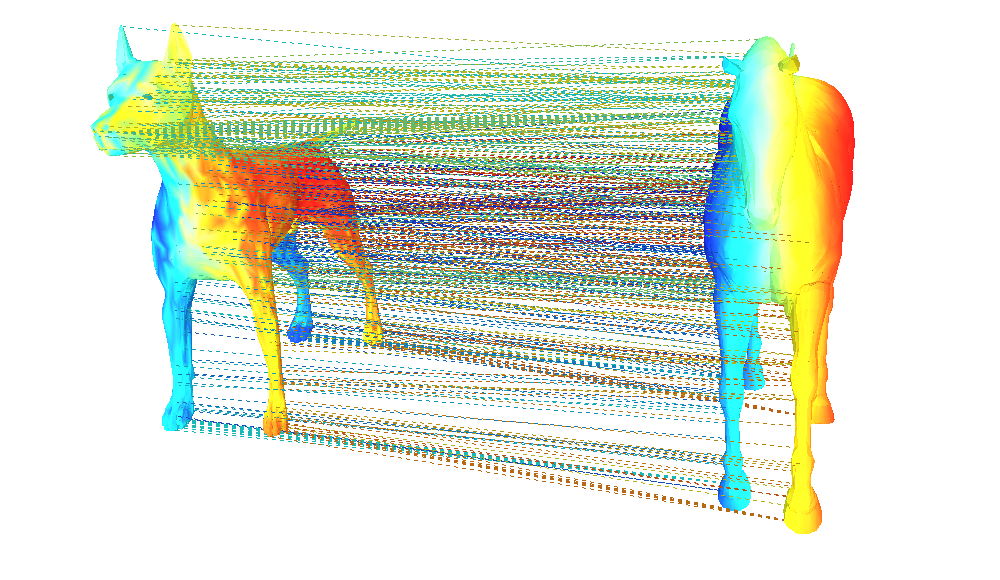} \\
(a) EM Initialization Step & (b) EM Final Step \\
\includegraphics[width=0.49\linewidth]{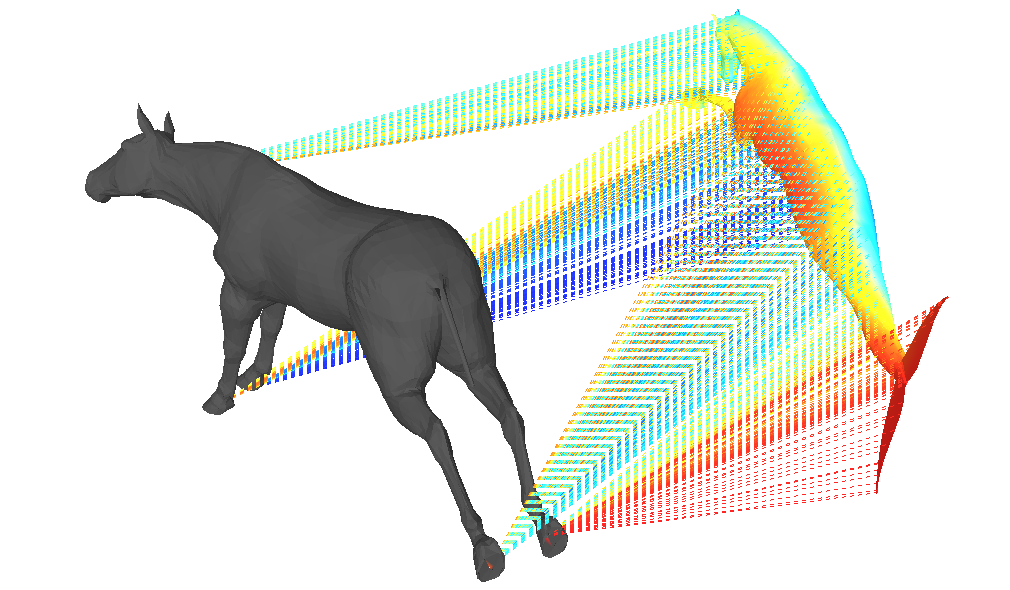} &
\includegraphics[width=0.49\linewidth]{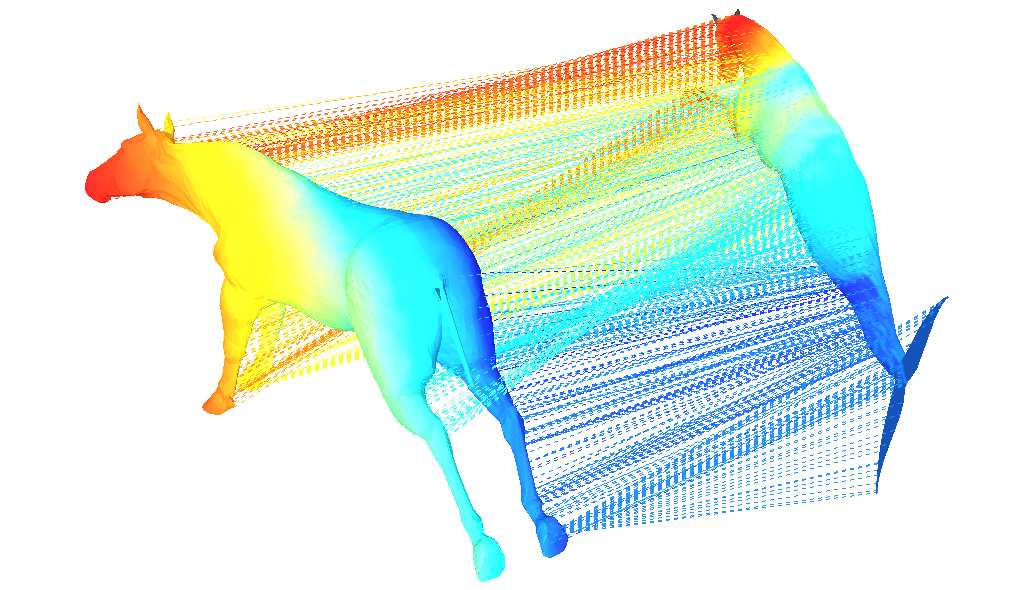} \\
(c) EM Initialization Step & (d) EM Final Step
\end{tabular}
\caption{3D shape registration performed on different shapes with similar topology.}
\label{Sharma_fig:inter_shape_results}
\end{center}
\end{figure}

Finally, we show shape matching results on two different human meshes captured with 
multi-camera system at MIT~\cite{Vlasic-Baran-SIGGRAPH-2008} and University of Surrey~\cite{Starck-Hilton-CGA-2007} 
in Figure~\ref{Sharma_fig:real_shape_results}
\begin{figure}[h!]
\begin{center}
\begin{tabular}{cc}
\includegraphics[width=0.49\linewidth]{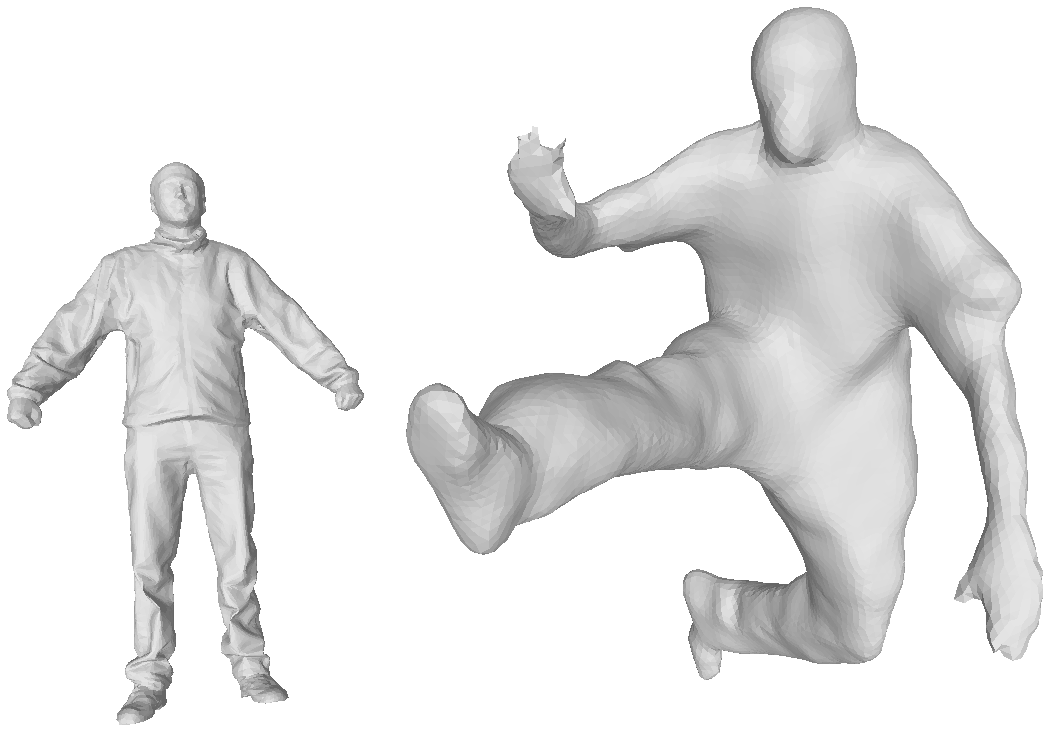} &
\includegraphics[width=0.49\linewidth]{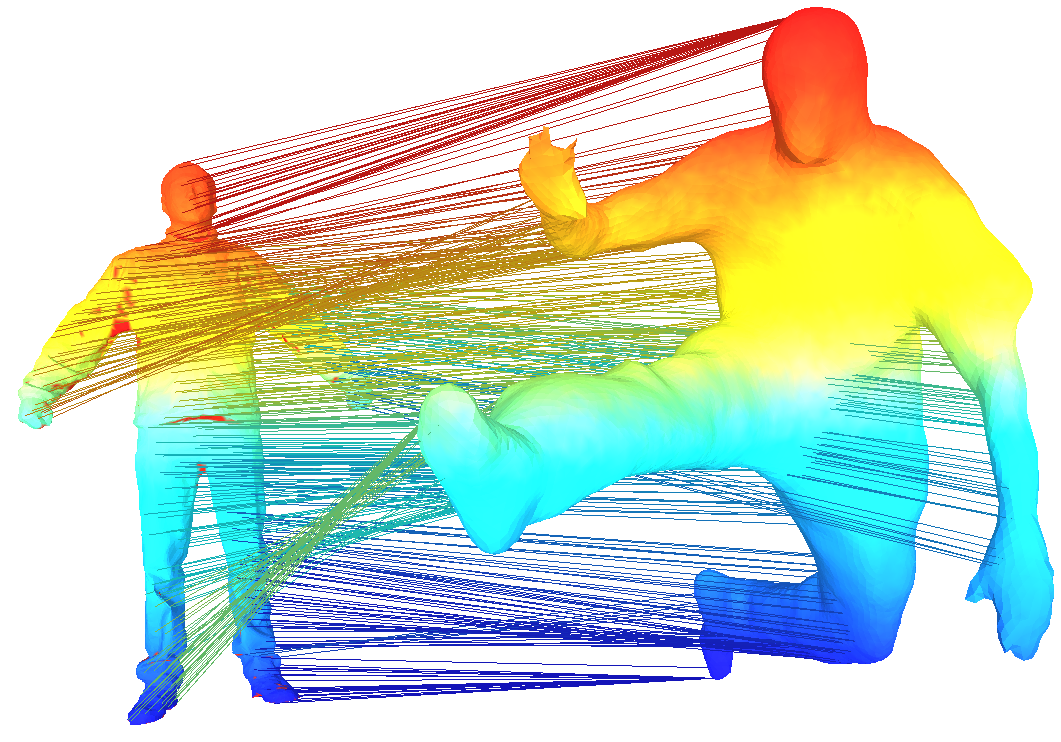} \\
(a) Original Meshes & (b) Dense Matching \\
\end{tabular}
\caption{3D shape registration performed on two real meshes captured from different sequence.}
\label{Sharma_fig:real_shape_results}
\end{center}
\end{figure}

\section{Discussion}
\label{Sharma_section:discussion}

%

This chapter describes a 3D shape registration approach
that computes dense correspondences between two articulated objects. 
We address the problem using spectral matching and unsupervised point registration method. 
We formally introduce graph isomorphism using the Laplacian
matrix, and we provide an analysis of the matching problem
when the number of nodes in the graph is very large, \textit{i.e.} of the
order of $O(10^4)$. We show that there is a simple equivalence between
graph isomorphism and point registration under the
group of orthogonal transformations, when the dimension
of the embedding space is much smaller than the cardinality
of the point-sets.

The eigenvalues of a large sparse Laplacian cannot be reliably
ordered. We propose an elegant alternative to eigenvalue
ordering, using eigenvector histograms and alignment
based on comparing these histograms. The point registration
that results from eigenvector alignment yields an
excellent initialization for the EM algorithm, subsequently
used only to refine the registration. 

However, the method is susceptible to large topology changes that
might occur in the multi-camera shape acquisition setup due to self-occlusion 
(originated from complex kinematics poses) and shadow effects. 
This is because Laplacian embedding is a global representation and
any major topology change will lead to large changes in embeddings causing
failure of this method. Recently, a new shape registration method proposed in~\cite{Sharma-Horadu-CVPR-2011} 
provides robustness to the large topological changes using the heat kernel framework. 
\appendix
\section{Permutation and Doubly-stochastic Matrices}
\label{Sharma_appendix:permutation}
A matrix $\mat{P}$ is called a \textit{permutation} matrix if exactly one
entry in each row and column is equal to $1$, and all other entries
are $0$. Left multiplication of a matrix $\mat{A}$ by a permutation
matrix $\mat{P}$ permutes the \textit{rows} of $\mat{A}$, while right
multiplication permutes the \textit{columns} of $\mat{A}$.

Permutation matrices have the following properties: $\det(\mat{P}) = \pm
1$, $\mat{P}\tp=\mat{P}\inverse$, the identity is a permutation matrix,
and the product of two permutation
matrices is a permutation matrix. Hence the set of permutation
matrices $\mat{P}\in\mathcal{P}_n$ constitute a subgroup of the
subgroup
of orthogonal matrices, denoted by $\mathcal{O}_n$, and
$\mathcal{P}_n$ has finite cardinality $n!$.

A non-negative matrix $\mat{A}$ is a matrix such that all its entries
are non-negative. A non-negative matrix
with the property that all its row sums are $+1$ is said to be a
\textit{(row) stochastic matrix}. A \textit{column stochastic matrix}
is the transpose of a row stochastic matrix. A stochastic matrix
$\mat{A}$ with the property that $\mat{A}\tp$ is also stochastic is said
to be \textit{doubly stochastic}: all row and column sums are $+1$ and
$a_{ij} \geq 0$.
The set of stochastic matrices is a compact convex set with the simple
and important property that $\mat{A}$ is stochastic if and only if
$\mat{A}\mathbbm{1}=\mathbbm{1}$ where $\mathbbm{1}$ is the vector with all components
equal to $+1$.

Permutation matrices are doubly stochastic matrices.\index{doubly stochastic matrices} If we denote by
$\mathcal{D}_n$ the set of doubly stochastic
matrices, it can be proved that $\mathcal{P}_n = \mathcal{O}_n \cap
\mathcal{D}_n$ \cite{ZavlanosPappas2008}.
The permutation matrices are the fundamental and prototypical
doubly stochastic matrices, for Birkhoff's theorem states that any
doubly stochastic matrix is a linear convex combination of finitely many
permutation matrices \cite{HornJohnson94}:
\begin{theorem}
\label{Sharma_theorem:Birkhoff}
(Birkhoff) A matrix $ \mat{A}$ is a doubly stochastic matrix if and only
if for some $N<\infty$ there are permutation matrices
$\mat{P}_1,\hdots,\mat{P}_N$ and positive scalars $s_1,\hdots,s_N$ such
that $s_1+\hdots + s_N=1$ and $\mat{A}=s_1 \mat{P}_1 +\hdots + s_N \mat{P}_N$.
\end{theorem}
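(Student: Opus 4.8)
The plan is to prove the two implications separately, with almost all of the work residing in the ``only if'' direction. The ``if'' direction is immediate: every permutation matrix is doubly stochastic (each row and each column carries a single $+1$), and since $\mathcal{D}_n$ is convex---as already noted in this appendix---any convex combination $s_1\mat{P}_1+\hdots+s_N\mat{P}_N$ with $s_i>0$ and $\sum_i s_i=1$ again lies in $\mathcal{D}_n$. I would then turn to the converse: every doubly stochastic $\mat{A}$ is such a convex combination. For this I would argue by induction on the number of strictly positive entries of $\mat{A}$. The base case is when $\mat{A}$ has exactly $n$ positive entries: since each of the $n$ rows carries at least one positive entry, there is then exactly one positive entry per row and, by the same count on columns, exactly one per column, and the unit row sums force each to equal $1$; hence $\mat{A}$ is itself a permutation matrix and we take $N=1$.

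The crux of the inductive step is a combinatorial lemma: the support of any doubly stochastic $\mat{A}$ contains a full permutation, i.e.\ there is a permutation $\sigma$ with $a_{i\sigma(i)}>0$ for every $i$. I would obtain this from Hall's marriage theorem applied to the bipartite graph joining row $i$ to column $j$ whenever $a_{ij}>0$; a perfect matching in this graph is exactly the desired $\sigma$. Hall's condition is verified by a counting argument that invokes double stochasticity directly: for any set $S$ of rows, the total mass carried by those rows is $|S|$ (unit row sums), that mass sits entirely in the columns $N(S)$ adjacent to $S$, and the total mass available in those columns is $|N(S)|$ (unit column sums); hence $|S|\le |N(S)|$.

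With $\sigma$, equivalently a permutation matrix $\mat{P}$, in hand I would ``peel off'' a multiple of $\mat{P}$: set $s=\min_i a_{i\sigma(i)}>0$. If $s=1$ then $\mat{A}=\mat{P}$ and we are finished; otherwise form $\mat{A}'=\frac{1}{1-s}(\mat{A}-s\mat{P})$. One checks that $\mat{A}-s\mat{P}$ is nonnegative (the subtraction touches only support entries of $\mat{P}$, and only by $s\le a_{i\sigma(i)}$) with all row and column sums equal to $1-s$, so $\mat{A}'\in\mathcal{D}_n$; moreover the entry attaining the minimum becomes $0$ while no new positive entries are created, so $\mat{A}'$ has strictly fewer positive entries than $\mat{A}$. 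The induction hypothesis gives $\mat{A}'=\sum_k t_k\mat{P}_k$, whence $\mat{A}=s\mat{P}+(1-s)\sum_k t_k\mat{P}_k$ exhibits $\mat{A}$ as a convex combination of permutation matrices, the coefficients $s$ and $(1-s)t_k$ being positive and summing to $1$.

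The main obstacle is the support lemma: everything downstream is routine bookkeeping, but the existence of a permutation inside the support is precisely where the structure of $\mathcal{D}_n$ enters essentially, and I expect the clean verification of Hall's condition to be the only delicate point. Should one wish to avoid invoking Hall, an alternative is to characterize the permutation matrices as the extreme points of the compact convex set $\mathcal{D}_n$ and appeal to the Krein--Milman theorem; however, the inductive peeling argument above is more elementary and self-contained, and it also yields the explicit finite bound $N<\infty$ demanded by the statement.
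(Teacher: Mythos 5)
Your proof is correct, but it follows a genuinely different route from the one the paper relies on. The paper defers to Horn and Johnson and sketches an extreme-point argument: $\mathcal{D}_n$ is a compact convex set, one shows that its extreme points are \emph{exactly} the permutation matrices, and then every point of a finite-dimensional compact convex set is a convex combination of its extreme points (Minkowski's theorem). That argument dovetails with how the theorem is actually used earlier in the chapter---in the Hoffman--Wielandt proof the linear functional $\langle \mat{X},\mat{C}\rangle$ is maximized over $\mathcal{D}_n$ precisely by appealing to the extreme points---so the paper's version buys a geometric picture that is reused elsewhere. Your inductive peeling argument, with Hall's marriage theorem supplying a permutation inside the support of $\mat{A}$, is the classical constructive proof: it is more elementary (no extreme-point characterization, no Minkowski/Krein--Milman), it yields an explicit algorithm for producing the decomposition, and it gives a concrete bound $N\leq n^2-2n+2$ on the number of permutation matrices needed, since the positive-entry count starts at most $n^2$, is at least $n$, and drops by at least one per peeling step. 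The verification of Hall's condition via the mass-counting argument is exactly right, and the base case and the bookkeeping in the inductive step are all sound; what your route does not give you for free is the statement that permutation matrices are the \emph{only} extreme points of $\mathcal{D}_n$, which the paper's optimization argument in Section~3.2 implicitly needs, though it does follow easily from the decomposition once established.
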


A complete proof of this theorem is to be found in \cite{HornJohnson94}[pages
526--528]. The proof relies on the fact that
$\mathcal{D}_n$ is a compact convex set and every point in such a set
is a convex combination of the extreme points of the set. First it is
proved that every permutation matrix is an extreme point of
$\mathcal{D}_n$ and second it is shown that a given matrix
is an extreme point of $\mathcal{D}_n$ if an only if it is a
permutation matrix.

%
%
\section{The Frobenius Norm}
\label{Sharma_appendix:Frobenius}
The Frobenius (or Euclidean) norm of a matrix $\mat{A}_{n \times n}$ is an \textit{entry-wise} norm 
that treats the matrix as a vector of size $1 \times nn$. The standard norm properties hold:
$\frobenius{\mat{A}}>0 \Leftrightarrow \mat{A}\neq 0$, $\frobenius{\mat{A}}=0 \Leftrightarrow \mat{A}= 0$, 
$\frobenius{c\mat{A}}= c\frobenius{\mat{A}}$, and $\frobenius{\mat{A}+\mat{B}} \leq \frobenius{\mat{A}} + \frobenius{\mat{B}}$. 
Additionally, the Frobenius norm is \textit{sub-multiplicative}:
\begin{equation}
\frobenius{\mat{A}\mat{B}} \leq \frobenius{\mat{A}} \frobenius{\mat{B}}
\label{Sharma_eq:sub-multiplicative}
\end{equation}
as well as \textit{unitarily-invariant}. This means that for any two orthogonal matrices $\mat{U}$ and $\mat{V}$:
\begin{equation}
\frobenius{\mat{U} \mat{A} \mat{V} } = \frobenius{\mat{A}}.
\label{Sharma_eq:unit-invariant}
\end{equation}
It immediately follows the following equalities:
\begin{equation}
\frobenius{\mat{U} \mat{A} \mat{U}\tp} =  \frobenius{\mat{U} \mat{A}} = \frobenius{\mat{A} \mat{U}}= \frobenius{\mat{A}}.
\end{equation}

\section{Spectral Properties of the Normalized Laplacian}
\label{Sharma_appendix:Properties-normalized-laplacian}
\paragraph*{The normalized Laplacian} 
Let $\tilde{\vec{u}}_k$ and $\gamma_k$ denote the eigenvectors and
eigenvalues of $\normlapmat$; The
spectral decomposition is
$\normlapmat=\tilde{\mat{U}}\mat{\Gamma}\tilde{\mat{U}}\tp$
 with
$\tilde{\mat{U}}\tilde{\mat{U}}\tp=\mat{I}$. The smallest eigenvalue and associated
eigenvector are $\gamma_1=0$ and $\tilde{\vec{u}}_1=\mat{D}^{1/2}\mathbbm{1}$.

We obtain the following equivalent relations:
\begin{eqnarray}
\label{Sharma_eq:sum-of-entries-2}
&\sum_{i=1}^n d_i^{1/2}\tilde{u}_{ik} = 0,&  2 \leq k \leq n \\
\label{Sharma_eq:limits-of-entries-2}
& d_i^{1/2} | \tilde{u}_{ik}| < 1, & 1\leq i \leq n, 2 \leq k \leq n.
\end{eqnarray}

Using (\ref{Sharma_eq:comb-from-rest}) we obtain a useful expression for the
combinatorial Laplacian in terms of the spectral decomposition of the
normalized Laplacian. Notice, however, that the expression below is
NOT a spectral decomposition of the combinatorial Laplacian:
\begin{equation}
\label{Sharma_eq:comb-from-normal-spectrum}
\lapmat = (\degmat^{1/2}\tilde{\mat{U}}\mat{\Gamma}^{1/2})(\degmat^{1/2}\tilde{\mat{U}}\mat{\Gamma}^{1/2})\tp.
\end{equation}

For a connected graph $\gamma_1$ has multiplicity 1:
$0=\gamma_1 < \gamma_2 \leq \hdots \leq \gamma_n$. As in the case of
the combinatorial Laplacian, there is an upper bound on the
eigenvalues (see \cite{Chung97} for a proof):
\begin{proposition}
\label{Sharma_prop:max-mu}
For all $k\leq n$, we have $\mu_k \leq 2 $.
\end{proposition}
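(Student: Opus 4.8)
The plan is to bound the largest eigenvalue $\gamma_n$ of the normalized Laplacian $\normlapmat$ through its Rayleigh quotient and then invoke $\gamma_k \le \gamma_n$ for every $k \le n$ (here I read the $\mu_k$ of the statement as the eigenvalues $\gamma_k$ of $\normlapmat$ introduced just above). First I would write
\begin{equation}
\gamma_n = \max_{\vec{y}} \frac{\vec{y}\tp \normlapmat \vec{y}}{\vec{y}\tp \vec{y}}
\end{equation}
and use the relation $\normlapmat = \degmat^{-1/2}\lapmat\degmat^{-1/2}$ from (\ref{Sharma_eq:L-normalized}). This parallels the proof of Proposition~\ref{Sharma_prop:max-lambda} for the combinatorial Laplacian, the difference being that the degree matrix will now sit in the \emph{denominator} of the quotient rather than inflating the final constant.

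The key step is the change of variables $\vec{y} = \degmat^{1/2}\vec{u}$, which turns the quotient into
\begin{equation}
\frac{\vec{u}\tp\lapmat\vec{u}}{\vec{u}\tp\degmat\vec{u}} = \frac{\sum_{e_{ij}} w_{ij}(u_i - u_j)^2}{\sum_i d_i u_i^2},
\end{equation}
where the numerator uses the edge-wise identity $\vec{u}\tp\lapmat\vec{u}=\sum_{e_{ij}}w_{ij}(u_i-u_j)^2$ already recorded in the proof of Proposition~\ref{Sharma_prop:max-lambda}. Applying the elementary inequality $(a-b)^2 \le 2(a^2+b^2)$ to the numerator then gives
\begin{equation}
\sum_{e_{ij}} w_{ij}(u_i - u_j)^2 \le 2\sum_{e_{ij}} w_{ij}(u_i^2 + u_j^2) = 2\sum_i d_i u_i^2,
\end{equation}
where the final equality collects, for each vertex $v_i$, the weights of all incident edges into the degree $d_i = \sum_{i\sim j} w_{ij}$.

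Dividing through, every admissible $\vec{u}$ yields a quotient bounded by $2$, so $\gamma_n \le 2$ and hence $\gamma_k \le 2$ for all $k \le n$, as claimed. The one point I would treat carefully, and the only real subtlety, is the denominator: in the combinatorial case the bound $\sum_i d_i u_i^2 \le \max_i(d_i)\sum_i u_i^2$ was needed and produced the loose constant $2\max_i(d_i)$, whereas here the degree-weighted norm $\sum_i d_i u_i^2$ is exactly what the substitution $\vec{y}=\degmat^{1/2}\vec{u}$ places in the denominator. The degrees therefore cancel identically against those appearing in the numerator, leaving the sharp constant $2$ with no dependence on the graph's maximum degree.
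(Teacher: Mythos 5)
Your proof is correct and is exactly the standard Rayleigh-quotient argument the paper itself relies on (it cites Chung rather than proving this proposition): the substitution $\vec{y}=\degmat^{1/2}\vec{u}$ reduces the claim to bounding $\sum_{e_{ij}}w_{ij}(u_i-u_j)^2$ by $2\sum_i d_i u_i^2$, which is precisely the mechanism of the paper's own proof of Proposition~\ref{Sharma_prop:max-lambda} for the combinatorial Laplacian, with the degree-weighted denominator now cancelling the degrees instead of producing the factor $\max_i(d_i)$. The only cosmetic point is that the $\mu_k$ in the statement is a typo for the $\gamma_k$ used in the surrounding text, which you read correctly.
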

We obtain the
following spectral decomposition for the normalized Laplacian
:
\begin{equation}
\normlapmat = \sum_{k=2}^{n} \gamma_k \tilde{\vec{u}}_k \tilde{\vec{u}}_k\tp.
\label{Sharma_eq:normalized-spectral-dec}
\end{equation}
The spread of the graph along the $k$-th normalized Laplacian
eigenvector is given by $\forall (k,i),  2 \leq k \leq
n, 1\leq i \leq n$:
\begin{eqnarray}
\label{Sharma_eq:sum-of-entries-2}
& \overline{\tilde{u}}_k = & \frac{1}{n} \sum_{i=1}^n \tilde{u}_{ik}\\
&\sigma_{u_k}  = & \frac{1}{n} -  \overline{\tilde{u}}_k^2.
\end{eqnarray}
Therefore, the projection of the graph onto an eigenvector $\tilde{\vec{u}}_k$
is not centered.
By combining  (\ref{Sharma_eq:comb-from-rest}) and
(\ref{Sharma_eq:normalized-spectral-dec}) we obtain an alternative
representation of the combinatorial Laplacian in terms of the the
spectrum of the normalized Laplacian, namely:
\begin{equation}
\lapmat = \sum_{k=2}^{n} \gamma_k (\mat{D}^{1/2}\tilde{\vec{u}}_k)( \mat{D}^{1/2} \tilde{\vec{u}}_k)\tp.
\end{equation}
Hence, an alternative is to project the graph onto the vectors
$\vec{t}_k=\mat{D}^{1/2}\tilde{\vec{u}}_k$. From $\tilde{\vec{u}}_{k\geq 2}\tp \tilde{\vec{u}}_1=0$
we get that $\vec{t}_{k\geq 2}\tp \mathbbm{1}=0$. Therefore, the spread
of the graph's projection onto $\vec{t}_k$ has the following mean and
variance, $\forall (k,i),  2 \leq k \leq
n, 1\leq i \leq n$:
\begin{eqnarray}
\label{Sharma_eq:sum-of-entries-2}
& \overline{t}_k = \sum_{i=1}^n d_i^{1/2}\tilde{u}_{ik} = & 0 \\
& \sigma_{t_k} = \frac{1}{n} \sum_{i=1}^{n} d_i \tilde{u}_{ik}^2.
\end{eqnarray}
\paragraph*{The random-walk Laplacian.} This operator is not
symmetric, however its spectral properties can be easily derived from
those of the normalized Laplacian using
(\ref{Sharma_eq:random-from-rest}). Notice that this can be used to transform
a non-symmetric Laplacian into a symmetric one, as proposed in
\cite{SunOvsjanikovGuibas2009} and in \cite{LuoSafaWang2009}.



\end{document}